\documentclass[12pt,nosubfloats]{colt2024} 

\title[]{Expressivity of Spiking Neural Networks}

\usepackage{times}
\coltauthor{\Name{Manjot Singh$^*$} \Email{singh@math.lmu.de}\\
 \Name{Adalbert Fono$^*$} \Email{fono@math.lmu.de}\\
 \Name{Gitta Kutyniok} \Email{kutyniok@math.lmu.de}\\
 \addr Ludwig-Maximilians-Universit\"at M\"unchen}

\usepackage{lipsum}
\usepackage{listings}

\usepackage{tikz}
\usetikzlibrary{decorations.pathreplacing}
\usetikzlibrary{fadings}
\usepackage{caption}
\usepackage{subcaption}

\usepackage{algorithm} 
\usepackage{soul}
\usepackage{algpseudocode} 
\newenvironment{hproof}{%
\proof}{\endproof}

\DeclareMathSymbol{\shortminus}{\mathbin}{AMSa}{"39}

\providecommand{\R}{\mathbb{R}} 
\providecommand{\N}{\mathbb{N}} 


%

%


\providecommand{\cR}{\mathcal{R}}

\providecommand{\abs}[1]{\left\lvert#1\right\rvert}
\providecommand{\norm}[1]{\left\lVert#1\right\rVert}

\begin{document}

\maketitle
\def\thefootnote{$*$}\footnotetext{Equal Contribution}

\begin{abstract}%
The synergy between spiking neural networks and neuromorphic hardware holds promise for the development of energy-efficient AI applications. Inspired by this potential, we revisit the foundational aspects to study the capabilities of spiking neural networks where information is encoded in the firing time of neurons.
Under the Spike Response Model as a mathematical model of a spiking neuron with a linear response function, we compare the expressive power of artificial and spiking neural networks, where we initially show that they realize piecewise linear mappings. In contrast to ReLU networks, we prove that spiking neural networks can realize both continuous and discontinuous functions. Moreover, we provide complexity bounds on the size of spiking neural networks to emulate multi-layer (ReLU) neural networks. Restricting to the continuous setting, we also establish complexity bounds in the reverse direction for one-layer spiking neural networks.%
\end{abstract}

\begin{keywords}%
  Expressivity, Approximation Theory, Spiking Neural Networks, Deep (ReLU) Neural Networks, Temporal Coding, Linear Regions
\end{keywords}

\section{Introduction}
Spiking Neural Networks (SNNs), sometimes considered as the third generation of neural networks, have recently emerged as a notable paradigm in neural computing. In traditional artificial neural networks (ANNs), information is propagated synchronously through the network, whereas SNNs are based on asynchronous information transmission in the form of an \textit{action-potential} or a \textit{spike} \citep{gerstner_kistler_naud_paninski_2014}. Spikes can be considered as point-like events in time, where incoming spikes received via a neuron's synapses trigger new spikes in the outgoing synapses. Hence, a key difference between ANNs and SNNs lies in the significance of timing in the operation of SNNs. Moreover, the (typically) real-valued input information to an SNN needs to be encoded in the form of spikes, necessitating a spike-based encoding scheme. 

Different encoding schemes enable spiking neurons to represent real-valued inputs, broadly categorized into rate and temporal coding \citep{gerstner_1993_assemblies}.
Rate coding refers to the number of spikes in a given time period whereas in temporal coding, the precise timing of spikes matters \citep{MAASS2001ontherelevanceoftime}. The notion of firing rate adheres to neurobiological experiments where it was observed that some neurons fire frequently in response to some external stimuli \citep{Stein1967TheIC, gerstner_kistler_naud_paninski_2014}. However, the firing rate results in high latency and is computationally expensive due to an overhead related to temporal averaging. The latest experimental results indicate that the firing time of a neuron is essential for the system to respond fast to more complex sensory stimuli \citep{Hopfield1995inputtiming, thorpe1996150ms, abeles_1991}. With the firing time, each spike carries a significant amount of information, thus the resulting signal can be quite sparse.

While there is no general consensus on the description of neural coding, in this work, we assume that the information is encoded exclusively in the firing time of a neuron. 
The event-driven, sparse information propagation, as seen in time-to-first-spike encoding \citep{Gerstner2002TTFS}, facilitates system efficiency in terms of reduced computational power and improved energy efficiency in comparison to the substantial time and energy consumption associated with training and inferring on ANNs \citep{Thompson2021DimReturns}. This concept is particularly relevant in the context of neuromorphic computing \citep{neuromprohic2022}, where a hardware architecture based on SNNs is designed to mimic the human brain's structure and functioning to achieve efficient information processing.

It is clear that the differences in the processing of information between ANNs and SNNs should also lead to differences in the computations performed by these models. 
Several groups have analyzed the expressive power of ANNs \citep{yarotsky2016error_bounds, Cybenko1989ApproximationBS, kutyniok2020nnaproxexpressvity, PETERSEN_2018_optimalapprox}, and in particular provided explanations for the superior performance of deep networks over shallow ones \citep{daub_2019_NNapprox, yarotsky2016error_bounds}. In the case of ANNs with ReLU activation function, the number of linear regions into which the input space is partitioned is another property that highlights the advantages of deep networks over shallow ones. Unlike shallow networks, deep networks divide the input space into exponentially more linear regions \citep{linearregions_unser_2022, montafur2014linearregions} enabling them to express more complex functions. There exist further approaches to characterize the expressiveness of ANNs, e.g., the concept of VC-dimension in the context of classification problems \citep{bartlett1998VC, vc_goldberg_1995, JMLRvcbartlett}.

Few attempts have been made to understand the computational power of SNNs. The works by \citet{Maass1996NoisySN, Maass1996lowerbounds} demonstrate the capability of spiking neurons to emulate Turing machines, arbitrary threshold circuits, and sigmoidal neurons in temporal coding. In \citet{Maass1996Networksthirdgeneration}, biologically relevant functions are depicted that can be emulated by a single spiking neuron but require complex ANNs to achieve the same task. \cite{temporal_single_spike_backprop2020_comsa}, \cite{Maass95approxsigmoid} showed that continuous functions can be approximated to arbitrary precision in temporal coding.
A connection between SNNs and piecewise linear functions was noted in \citet{mostafa2018pwl}. The author showed that an SNN consisting of non-leaky integrate and fire neurons and temporal coding exhibits a piecewise linear input-output relation after a transformation of the time variable.
A common theme is that the model of spiking neurons and the description of their dynamics varies, i.e., they are chosen and adjusted for a specific goal or task. 
\citet{optimizedFS_maass2021} aims at generating high-performance SNNs for image classification using a modified spiking neuron model that limits the number of spikes emitted by each neuron while considering precise spike timing. In \citet{zhang2022firingratesapprox}, the authors investigate self-connection SNNs, demonstrating their capacity to efficiently approximate discrete dynamical systems. 
\citet{moraitis2021optimality} showcases the ability of SNNs using short-term spike-timing-dependent-plasticity mechanism to model certain dynamic environments.

The primary challenge in advancing the field of SNNs has revolved around devising training methodologies. The typical approach is to either train from scratch \citep{approx_grad_lee_2020, approx_grad_wu_2018, temporal_single_spike_backprop2020_comsa, firstspike2021goltz} or convert trained ANNs into SNNs performing the same tasks \citep{bodo2017cnnsnn, Kim2018DeepNN, bodo2021temporalpatterncoding,  bodo2018convertinganntosnn, approxrelu2022ana, relutosnn2022ana, Zhang_Zhou_Zhi_Du_Chen_2019, Yan_Zhou_Wong_2021}. 
The latter works concentrate on the algorithmic construction of SNNs approximating or emulating given ANNs for various spike patterns, encoding schemes, and spiking neuron models. Therefore, we aim to extend the theoretical understanding of the differences and similarities in the expressive power between a network of spiking and artificial neurons employing a piecewise-linear activation function.

\paragraph{Contributions} In this paper, to analyze SNNs, we employ the noise-free version of the Spike Response Model (SRM) \citep{timestructure1995gerstner}. It describes the state of a neuron as a weighted sum of response and threshold functions. We assume a linear response function, where additionally each neuron spikes at most once to encode information through precise spike timing. The spiking networks based on the linear SRM are succinctly referred to as LSNNs. 
The main results are centered around the comparison of expressive power between LSNNs and ANNs:

\begin{itemize}
    \item[$\bullet$] \textbf{Similarities between LSNNs and ReLU-ANNs:} We show that LSNNs are as expressive as ANNs with piecewise activation when expressing various functions. 
\begin{itemize}
    \item[\scriptsize$\bullet$] We prove that the mapping generated by an LSNN is piecewise linear and under certain settings continuous, concave, and increasing. 
    \item[\scriptsize$\bullet$] We show that there exists an LSNN that emulates the ReLU non-linearity. Then, we extend the result to multi-layer neural networks and show that LSNNs have the capacity to effectively emulate any (ReLU) ANN. Furthermore, we present explicit complexity bounds for constructing an LSNN capable of realizing an equivalent ANN. We also provide insights into the influence of the encoding scheme and the impact of different parameters on the above expressivity results. These findings imply that LSNNs can approximate any function as accurately as deep ANNs with a piecewise linear activation function.
\end{itemize}
\end{itemize}

\begin{itemize}
    \item[$\bullet$] \textbf{Differences between LSNNs and ReLU-ANNs:} We prove distinctive characteristics of LSNNs that distinguish them from ReLU-ANNs, thus illustrating differences in the structure of computations between LSNNs and ANNs.

\begin{itemize}
    \item[\scriptsize$\bullet$] We show that the mapping generated by LSNNs may be discontinuous which is in contrast to a ReLU-ANN, which outputs a continuous piecewise linear mapping. This suggests that LSNNs might be better suited for approximating / realizing discontinuous piecewise functions. 
    \item[\scriptsize$\bullet$] We demonstrate that the maximum number of linear regions that a one-layer LSNN generates scales exponentially with input dimension. Consequently, a shallow LSNN can be as expressive as a deep ReLU network in terms of the number of linear regions required to express certain types of continuous piecewise linear functions. Additionally, we give upper bounds on the size of ReLU-ANNs to emulate one-layer LSNNs. 
\end{itemize}
\end{itemize}

\paragraph{Broader impact} 
The findings presented herein deepen our understanding of the theoretical capabilities of SNNs and their differences from ANNs. Although we consider a simplified model of spiking dynamics within the LSNN framework, we obtain insights into event-driven computations where time plays a critical role.
Moreover, our results further extend the understanding of the approximation properties of spiking neural networks, emphasizing their potential as an alternative computational model for handling complex tasks. 
By studying the theoretical power of SNNs, we aim to contribute to the realization of energy-efficient and low-power AI on neuromorphic hardware, providing viable options in contrast to established deep learning models.

\paragraph{Outline} In Section \ref{section:SNN_model}, we introduce necessary definitions, including spiking neural networks and their realization under the Spike Response Model. We present our main results in Section \ref{section: main_results}. 
We conclude in Section \ref{section:discussion} by summarizing the limitations and implications of our results. The proofs of all the results are provided in Appendix \ref{section:Appendix}.

\section{Spiking neural networks}
\label{section:SNN_model}
In neuroscience literature, several mathematical models exist that describe the generation and propagation of action-potentials. 
Action-potentials or spikes are short electrical pulses that are the result of electrical and biochemical properties of a biological neuron. We refer to \citet{gerstner_kistler_naud_paninski_2014} for a comprehensive and detailed introduction to the dynamics of spiking neurons. To study the expressivity of SNNs, the main principles of a spiking neuron are condensed into a (simplified) mathematical model, where certain details about the biophysics of a biological neuron are neglected.

\subsection{Spiking neurons under Spike Response Model}
Following \citet{Maass1996NoisySN}, we consider the Spike Response Model (SRM) \citep{timestructure1995gerstner} as a formal model for a spiking neuron. It effectively captures the dynamics of the Hodgkin-Huxley model \citep{gerstner1997volterra, gerstner_kistler_naud_paninski_2014}, the most accurate model in describing neuronal dynamics, and is a generalized version of the leaky integrate and fire model \citep{timestructure1995gerstner}. The SRM leads to the subsequent definition of an SNN \citep{Maass1996lowerbounds}. 

\begin{definition}
    \label{def:SNN_General}
    A \emph{spiking neural network} $\Phi$ under the SRM is a (simple) finite directed graph $(V,E)$ and consists of a finite set $V$ of spiking neurons, a subset $V_{\text{in}} \subset V$ of input neurons, a subset $V_{\text{out}} \subset V$ of output neurons, and a set $E \subset V \times V$ of synapses. Each \emph{synapse} $(u, v) \in E$ is associated with a \emph{synaptic weight} $w_{uv} \geq 0$, a \emph{synaptic delay} $d_{uv} \geq 0$, and a \emph{response function} $\varepsilon_{uv} : \R \to \R$, which depends on the synaptic delay. Each neuron $v \in V \setminus  V_{\text{in}}$ is associated with a \emph{firing threshold} $\theta_v > 0$, and a \emph{membrane potential} $P_v: \R \to \R$, which is given by
    \begin{equation}\label{eqn:SRM}
        P_v(t) = \sum_{(u,v)\in E} \sum_{t_u^f \in F_u} w_{uv}\varepsilon_{uv}(t - t_u^f),
    \end{equation}
    where $F_u = \{t_u^f: 1\leq f \leq n \text{ for some } n\in\N\}$ denotes the set of firing times of a neuron $u$, i.e., times $t$ whenever $P_u(t)$ reaches $\theta_u$ from below. 
\end{definition}

In general, the membrane potential also includes the \textit{threshold function} $\Theta_v: \R_{\geq0} \to \R_{>0}$, that models the refractoriness effect. That is, if a neuron $v$ emits a spike at time $t_v^f$, $v$ cannot fire again for some time interval immediately after $t_v^f$, regardless of how large its potential might be. However, we assume that each neuron fires at most once, i.e., information is encoded in the firing time of single spikes. Thus, in Definition \ref{def:SNN_General}, the refractoriness effect can be ignored, and the contribution of $\Theta_v$ is modeled by the constant $\theta_v$. 
Moreover, the single spike condition simplifies \eqref{eqn:SRM} to
\begin{equation}\label{eq:SRM2}
    P_v(t) = \sum_{(u,v)\in E} w_{uv}\varepsilon_{uv}(t - t_u), \quad \text{ where } t_u \text{ is the firing time of presynaptic neuron } u.
\end{equation}
The response function $\varepsilon_{uv}$ models the impact of a spike from a presynaptic neuron $u$ on the membrane potential of a postsynaptic neuron $v$ \citep{timestructure1995gerstner}.
A biologically realistic approximation of $\varepsilon_{uv}$ is a delayed $\alpha$ function \citep{timestructure1995gerstner}, which is non-linear and leads to intractable problems when analyzing the propagation of spikes through an SNN. Hence, following \citet{Maass1996NoisySN}, we consider a simplified response and only require $\varepsilon_{uv}$ to satisfy the following condition:
\begin{equation}\label{eq:Response_cond}
    \varepsilon_{uv}(t) =  \begin{cases}
                                            0, & \text{if } t \notin [d_{uv}, d_{uv} + \delta],\\
                                            s \cdot (t-d_{uv}),  & \text{if } t \in [d_{uv}, d_{uv} + \delta],
    \end{cases}  
     \quad \text{ where } s\in\{+1,-1\} \text{ and } \delta > 0.
\end{equation}
The parameter $\delta$ is some constant assumed to be the length of a linear segment of the response function. The variable $s$ reflects the fact that biological synapses are either \textit{excitatory} or \textit{inhibitory} and the synaptic delay $d_{uv}$ is the time required for a spike to travel from $u$ to $v$. Inserting condition \eqref{eq:Response_cond} in \eqref{eq:SRM2} and setting $w_{uv} := s\cdot w_{uv}$, i.e., allowing $w_{uv}$ to take arbitrary values in $\R$, yields
        \begin{equation}\label{eq:SRM3}
            P_v(t) =\sum_{(u,v)\in E} \mathbf{1}_{\{0 < t - t_u - d_{uv} \leq \delta\}}  w_{uv} (t - t_u -d_{uv})  
        \end{equation}

Using \eqref{eq:SRM3} enables us to iteratively compute the firing time $t_v$ of each neuron $v \in V\setminus V_{\text{in}}$ if we know the firing time $t_u$ of each neuron $u\in V$ with $(u,v)\in E$ by solving for $t$ in 
\begin{align}
    \inf_{t\in \R}  P_v(t) = \inf_{t\in \R}   \sum_{(u,v)\in E}  \mathbf{1}_{\{0 < t - t_u - d_{uv} \leq \delta\}} w_{uv} (t - t_u -d_{uv}) = \theta_v, \label{eq:potential}\\  
    \text{ i.e., } \quad t_v = \frac{\theta_v + \sum_{(u,v)\in E}  \mathbf{1}_{\{0 < t_v - t_u - d_{uv} \leq \delta\}} w_{uv} (t_u + d_{uv})}{\sum_{(u,v)\in E} \mathbf{1}_{\{0 < t_v - t_u - d_{uv}  \leq \delta\}} w_{uv} }. \nonumber
\end{align}

Observe that $t_v$ is a weighted sum (up to a positive constant) of the firing times of neurons $u$, $(u,v)\in E$, actually contributing to the firing of $v$. For instance, if $t_z + d_{zv} > t_v$ for some synapse $(z,v)\in E$, then $z$ did not influence the firing of $v$ since the spike from $z$ arrived after $v$ already fired. Depending on the firing time of the presynaptic neurons and the associated parameters (weights, delays, threshold), a specific subset of presynaptic neurons triggers the firing in $v$ so that $t_v$ changes accordingly. The dynamics of a neuron in this model are depicted in Figure \ref{fig: DynamicsSRM}.

\begin{definition}
\label{rem:LSNN}    
    We call an SNN based on the SRM with the additional assumptions \eqref{eq:Response_cond} and \eqref{eq:SRM3} an \emph{LSNN} and the corresponding spiking neurons \emph{LSNN neurons}.
\end{definition}

\begin{figure}[t]
    \centering
    \begin{subfigure}[b]{0.4\textwidth}
    \hspace{1cm}
    	\begin{tikzpicture}[shorten >=1pt, transform canvas = {scale=0.75}]
		\tikzstyle{unit}=[draw,shape=circle,minimum size=0.5cm]
		\tikzstyle{hidden}=[draw,shape=circle,fill=black!25,minimum size=0.5cm]
		\tikzstyle{hidden}=[draw,shape=circle,minimum size=0.5cm]
		\node[unit](x0) at (0,3.5){$t_{u_1}$};
            \node[unit](x1) at (2,3.5){$t_{u_2}$};
		\node at (3.4,3.5){\dots\dots};
		\node[unit](xd) at (5,3.5){$t_{u_5}$};
		\node[unit](h11) at (3,6){$t_v$}; 
		\draw[->] (x0) -- (h11);
		\draw[->] (x1) -- (h11);
		\draw[->] (xd) -- (h11);
            \draw[->, scale=1, >=stealth, line width=1pt, bend right] (4,6) to (6.5,6);
            \draw [decorate,decoration={brace,amplitude=10pt,mirror}](-0.6,2.7) -- (5.7,2.7) node[midway,yshift=-1.7em]{input neurons};
		\draw [decorate,decoration={brace,amplitude=10pt},xshift=-4pt,yshift=0pt] (2.5,6.3) -- (3.75,6.3) node [black,midway,yshift=+0.6cm]{output neuron};
	\end{tikzpicture}
	\caption{}
        \label{subfig:snn_firing_time}
    \end{subfigure}
    \begin{subfigure}[b]{0.59\textwidth}
    \centering 
    \includegraphics[height = 6.7cm, width=\linewidth]{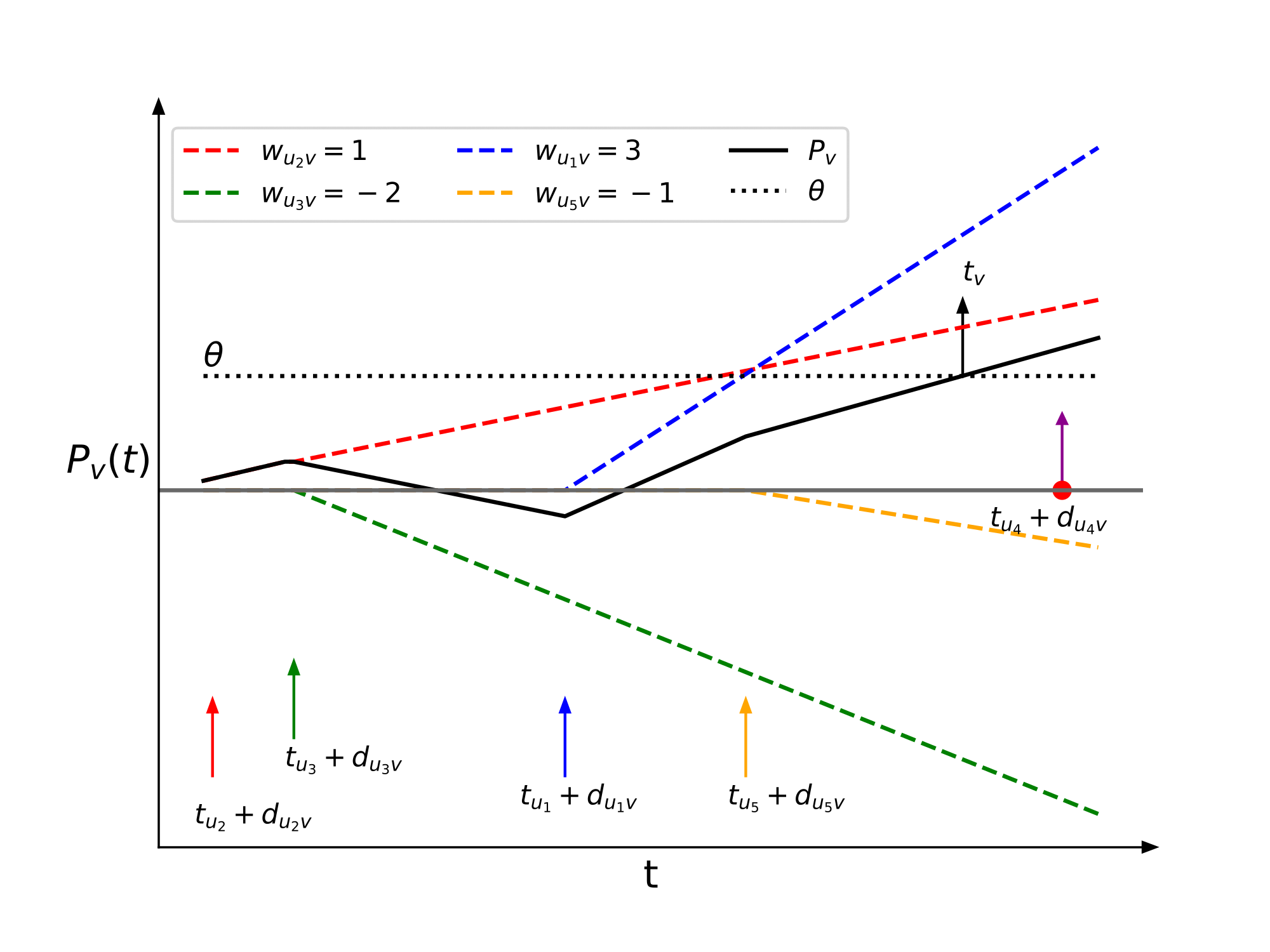}
    \caption{}
    \label{subfig:snn_evolution_of_potential}
    \end{subfigure}
    \caption{(a) An LSNN neuron $v$ with five input neurons $u_1,\dots,u_5$ that fire at times $t_{u_1},\dots,t_{u_5}$, respectively. (b) The trajectory in black shows the evolution of the membrane potential $P_v(t)$ of $v$ as a result of incoming spikes (vertical arrows). Neurons $u_1$ and $u_2$ generate positive responses, whereas neurons $u_3$ and $u_5$ trigger negative responses, with the response magnitudes denoted by $w_{u_iv}$. The spike from neuron $u_4$ does not influence the firing time $t_v$ of $v$ since $t_v < t_{u_4} + d_{u_4v}$.}
    \label{fig: DynamicsSRM} 
\end{figure}

\subsection{Realizations of LSNNs}
\label{subsec:realization_SNNs_defn}

A common representation of feedforward ANNs is based on a sequence of matrix-vector tuples \citep{Berner_2022}, \citep{PETERSEN_2018_optimalapprox}, whereby a distinction between the network and the target function it realizes is established.
\begin{definition}
\label{def:ANN_realization}  
Let $L, N_0, \dots, N_L \in \N$. An \emph{artificial neural network} $\Psi$ is a sequence of matrix-vector tuples
\begin{equation*}
    \Psi = ((W^1, B^1), (W^2, B^2), \dots, (W^L, B^L)),
\end{equation*}
where each $W^\ell \in \R^{N_{\ell-1}\times N_\ell}$ and $B^\ell \in \mathbb{R}^{N_\ell}$. $N_0$ and $N_L$ are the input and output dimension of $\Psi$. 
We call $N(\Psi) := \sum_{j=0}^{L}N_j$ \emph{the number of neurons of the network} $\Psi$, $L(\Psi) := L$ \emph{the number of layers} of $\Psi$ and $N_\ell$ \emph{the width of $\Psi$} in layer $\ell$.    
The \emph{realization} of $\Psi$ with component-wise \emph{activation function} $\sigma:\R \to\R$ is defined as the map $\cR_{\Psi}:\R^{N_0} \rightarrow \R^{N_L}$, $\cR_{\Psi}(x) = y_L$, where $y_L$ results from  
\begin{align}\label{eq:ANNcomp}
    y_0 &= x, \hspace{0.5em} y_\ell = \sigma((W^\ell)^T y_{\ell-1} + B^\ell), \:\: \text{for} \: \ell=1,\dots, L-1, \hspace{0.5em} \text{and} \:\: y_L = (W^L)^T y_{L-1} + B^L.
\end{align}
\end{definition}
\begin{remark}
     Henceforth, $\sigma(x) = \max(0,x)$ denotes the \emph{ReLU activation}.
\end{remark}
An analogous framework can be derived for LSNNs by arranging the underlying graph in layers and equivalently representing LSNNs by a sequence of their parameters. 
\begin{definition}
\label{def:SNN_matrix}
Let $L, N_0, \dots, N_L \in \N$. An \emph{LSNN} $\Phi$ associated to the acyclic graph $(V,E)$ is a sequence of matrix-matrix-vector tuples 
\begin{equation*}
    \Phi = ((W^1, D^1, \Theta^1), (W^2, D^2, \Theta^2), \dots, (W^L, D^L, \Theta^L))
\end{equation*}
where each $W^l \in \mathbb{R}^{N_{\ell-1}\times N_\ell}$, $D^\ell \in \mathbb{R}_{\geq0}^{N_{\ell-1}\times N_\ell}$, and $\Theta^\ell \in \mathbb{R}_{>0}^{N_\ell}$. The matrix entries $W^\ell_{uv}$ and $D^\ell_{uv}$ represent the weight and delay value associated with the synapse $(u,v)\in E$, respectively, and the entry $\Theta^\ell_v$ is the firing threshold associated with node $v \in V$ in layer $\ell$. $N_0$ is the input dimension and $N_L$ is the output dimension of $\Phi$. We call $N(\Phi) := \sum_{j=0}^{L}N_j$ the \emph{number of neurons} and $L(\Phi) := L$ denotes the \emph{number of layers} of $\Phi$.
\end{definition}
Before turning to the realization of LSNNs, we highlight two assumptions we will rely on, which allow us to analyze the LSNN framework in the most basic setting: 

\textbf{Assumption I}: The parameter $\delta$ describing the length of the linear segment of the response function introduced in \eqref{eq:Response_cond} is assumed to be large or even infinite. Then the minimization problem in \eqref{eq:potential} to obtain the firing time $t_v$ of a neuron $v$ simplifies to
\begin{equation}\label{eqn:firing_time}
    \inf_{t\in\R}  P_v(t) =  \inf_{t\in \R}   \sum_{(u,v)\in E}  \mathbf{1}_{\{t> t_u + d_{uv}\}} w_{uv} (t - t_u -d_{uv}) = \inf_{t\in \R}   \sum_{(u,v)\in E} w_{uv} \sigma(t - t_u -d_{uv}) = \theta_v 
\end{equation}
so that 
\begin{equation}\label{eqn:firing_time_specific}
     t_v = \frac{\theta_v + \sum_{(u,v)\in E : t_v > t_u + d_{uv}} w_{uv} (t_u + d_{uv})}{\sum_{(u,v)\in E : t_v > t_u + d_{uv}}  w_{uv} }.    
\end{equation}
Informally, a large linear segment entails that spikes have a constant effect on postsynaptic neurons so that spikes do not act as point-like events in time. The obtained framework, which exhibits similarities to the integrate and fire model, enables us to assess the spiking dynamics of LSNN neurons and gain insights into what we can expect from more generalized models incorporating multi-spike responses and refractoriness effects. In contrast, a biologically more realistic small linear segment requires incoming spikes to have a correspondingly small time delay to jointly affect the potential of a neuron. Otherwise, the impact of the earlier spikes on the potential may already have vanished before the subsequent spikes arrive. The resulting model resembles the leaky integrate and fire model. In conclusion, incorporating $\delta$ as an additional parameter in the LSNN framework leads to additional complexity since the same firing patterns may result in different outcomes. However, an in-depth analysis of this effect is left as future work.

\textbf{Assumption II}: The sum of incoming weights of each neuron $v$ in an LSNN $\Phi$ is assumed to be positive. The positivity ensures that each neuron in $\Phi$ emits a spike, in particular, it is a sufficient but not a necessary condition to guarantee that spikes are emitted by the output neurons. One can certainly treat LSNNs without requiring that neurons have to spike, which again leads to augmented complexity due to increased flexibility in the model.

Under the introduced conditions, the firing time of LSNN neurons can be considered as well-defined mappings in the following sense.
\begin{definition}\label{def:firingMap}
    Let $\Phi$ be an LSNN with input neurons $u_1, \dots, u_d$ and output neurons $v_1, \dots, v_n$. For any firing time of the input neurons $(t_{u_1}, \dots, t_{u_d})^T \in \R^d$ and the corresponding firing times of the output neurons $(t_{v_1},\dots,t_{v_n})^T \in \R^n$ determined via \eqref{eqn:firing_time}, we denote by $t_\Phi: \R^d \to \R^n$, $(t_{u_1}, \dots, t_{u_d}) \mapsto t_\Phi(t_{u_1}, \dots, t_{u_d})=(t_{v_1},\dots,t_{v_n})^T$ the \emph{firing mapping} of $\Phi$.
\end{definition}
The key feature of any SNN is the asynchronous information propagation in the spiking domain due to variable firing times among neurons.
Hence, to employ SNNs, the (typically real-valued) input information needs to be encoded in the firing times of the neurons in the input layer, and similarly, the firing times of the output neurons need to be translated back to an appropriate target domain. We will refer to this process as input encoding and output decoding. The applied encoding scheme certainly depends on the specific task at hand and the potential power and suitability of different encoding schemes is a topic that warrants separate investigation on its own. 
Our focus in this work lies on exploring the intrinsic capabilities of LSNNs, rather than the specifics of the encoding scheme.

Thus, we can formulate some guiding principles for establishing a reasonable encoding scheme. First, the firing times of input and output neurons should encode real-valued information in a consistent way so that different networks can be concatenated in a well-defined manner. This enables us to construct suitable subnetworks and combine them appropriately to solve more complex tasks. One can perform basic actions on neural networks such as concatenation and parallelization to construct larger networks from existing ones. 
Adapting a general approach for ANNs as defined in \citet{Berner_2022, PETERSEN_2018_optimalapprox}, we formally introduce the concatenation and parallelization of networks of spiking neurons in the Appendix \ref{section:SNN_calculus}. 
Second, in the extreme case, the encoding scheme might directly contain the solution to a problem, underscoring the need for a sufficiently simple and broadly applicable encoding scheme to avoid this.

\begin{definition}\label{definition:encoding}
    Let $[a,b]^d \subset \R^d$ and $\Phi$ be an LSNN with input neurons $u_1, \dots, u_d$ and $n$ output neurons. 
    Fix reference times $T_{\text{in}}\in\R^d$ and $T_{\text{out}} \in \R^n$ via $T_{\text{in}}= t_{\text{in}}\, (1,\dots,1)^T$ and $T_{\text{out}}= t_{\text{out}}\, (1,\dots,1)^T$, respectively, where 
    $t_{\text{in}},t_{\text{out}} \in \R$ with $t_{\text{out}} > t_{\text{in}}$. For any $x \in [a,b]^d$, we set the firing times of the input neurons to $(t_{u_1}, \dots, t_{u_d})^T = T_{\text{in}} + x$. The corresponding firing times of the output neurons $t_\Phi(t_{u_1}, \dots, t_{u_d}) = T_{\text{out}} + y$ 
    encode the target $y \in \R^n$.
    The \emph{realization of} $\Phi$ is defined as the map $\mathcal{R}_{\Phi}: \mathbb{R}^d \to \mathbb{R}^{n}$, 
    \begin{equation*}
         \mathcal{R}_{\Phi}(x) = -T_{\text{out}} + t_\Phi(t_{u_1}, \dots, t_{u_d}) = y.
    \end{equation*}
\end{definition}
\begin{remark}
\label{remark:encoding}
    A bounded input range ensures that appropriate reference times can be fixed. Note that the introduced encoding scheme translates real-valued information into input firing times in a continuous manner. Occasionally, we will point out the effect of adjusting the scheme. 
\end{remark}

\section{Main results}
\label{section: main_results}
Subsequently, we will employ the framework introduced in Section \ref{section:SNN_model} to analyze the properties of LSNNs. First, we prove that LSNNs generate \textbf{C}ontinuous \textbf{P}iece\textbf{w}ise \textbf{L}inear (CPWL) mappings under certain conditions on the weights. Next, we show that LSNNs can emulate the realization of any multi-layer ANN employing ReLU as an activation function. We analyze the number of linear regions generated by LSNNs and compare the arising pattern to the well-studied case of ReLU-ANNs. 
Lastly, our findings show that LSNNs can efficiently realize certain CPWL functions using fewer computational units and layers compared to ReLU-ANNs.
If not stated otherwise, the encoding scheme introduced in Definition \ref{definition:encoding} is applied and the results need to be understood concerning this specific encoding. 

\subsection{Characterization of functions expressed by LSNNs}
\label{subsection:CPWL_mapping}
A broad class of ANNs based on a wide range of activation functions such as ReLU generate CPWL mappings \citep{daub_fractals_2020, DeVore2020NNApproximation}. In other words, these ANNs partition the input domain into regions, the so-called linear regions, on which an affine function represents the neural network's realization. Analyzing the firing mapping introduced in Definition \ref{def:firingMap}, we find that LSNNs exhibit a similar behaviour although the continuity is not necessarily maintained. 
The proof of the statement can be found in Appendix \ref{sec:realizationSNNs_appendix}. 
\begin{theorem}\label{thm:CPWL_mapping}
Let $\Phi = ((W^1, D^1, \Theta^1), (W^2, D^2, \Theta^2), \dots, (W^L, D^L, \Theta^L))$ be an LSNN. The firing mapping $t_\Phi$ is PWL. If additionally 
\begin{equation}\label{eq:SNN_CPWL}
    W^\ell_{uv} + \sum_{z : W^\ell_{zv} \leq 0} W^\ell_{zv} >0 \quad \text{for all } \ell \text{ and } u,v \text{ with } W^\ell_{uv} > 0
\end{equation}
holds, i.e., each incoming positive synaptic weight of any neuron $v$ is larger than the absolute value of the sum of its incoming negative synaptic weights, then $t_\Phi$ is a CPWL mapping. 
\end{theorem} 
\begin{hproof}
    First, via \eqref{eqn:firing_time_specific} one can derive that the firing mapping of an LSNN neuron with arbitrarily many presynaptic neurons is PWL. Since $\Phi$ consists of LSNN neurons arranged in layers it immediately follows that the firing map of each layer is PWL. Thus, as a composition of PWL mappings  $t_\Phi$ itself is PWL. Moreover, if all weights in $\Phi$ are positive, then the continuity of $t_\Phi$ at the breakpoints of the linear regions can be directly verified. In contrast, negative weights can under certain circumstances create a plateau or decrease the potential, causing a delay in firing, hence, resulting in a (jump-)discontinuity in $t_\Phi$. However, this effect can be excluded via \eqref{eq:SNN_CPWL} so that $t_\Phi$ is continuous if \eqref{eq:SNN_CPWL} holds.
\end{hproof}
The condition given in \eqref{eq:SNN_CPWL} is sufficient but not necessary to generate CPWL mappings; a corresponding example is provided in Appendix \ref{sec:realizationSNNs_appendix}. Under stronger assumptions, we can further characterize the properties of LSNNs. The properties can be verified mainly by repeated application of \eqref{eqn:firing_time} and \eqref{eqn:firing_time_specific}; the detailed computations are presented in Appendix \ref{sec:realizationSNNs_appendix}.
\begin{proposition}\label{prop:SNNproperties}
    Let $\Phi$ be an LSNN with only positive weights. Then $t_\Phi$ is an increasing and concave function. Additionally, the firing time of a neuron $v$ in $\Phi$ with corresponding 
    parameter $(w, d, \theta) \in \R^{d} \times \R_{\geq0}^{d} \times \R_{>0}$ and firing times $t_{u_1},\dots,t_{u_d} \in \R$ in the previous layer is given by 
    \begin{equation*}
        t_v(t_{u_1},\dots,t_{u_d}) = \inf_{\emptyset \neq I \subset [d]} \Big\{ s^I = \frac{1}{\sum_{i \in I} w_i}\big(\theta + \sum_{i \in I}w_i (t_{u_i} + d_i)\big) : s^I > \max_{i\in I} t_{u_i}   \Big\}.
    \end{equation*}
\end{proposition}

The properties described in Proposition \ref{prop:SNNproperties} are in general not true if the positive weights assumption is dropped. An immediate follow-up question is if the above findings apply to the realization of LSNNs via input and output encoding on a bounded domain. It is immediate to verify that the realization of an LSNN $\Phi$ is CPWL, increasing or concave if the encoding scheme and $t_\Phi$ are CPWL, increasing or concave, respectively, which certainly holds for the encoding presented in Definition \ref{definition:encoding}. However, even if $t_\Phi$ is not CPWL, increasing or concave, the corresponding feature can still arise in the realization due to the bounded input domain as the constructions in the subsequent results indicate. 
In particular, we show that LSNNs can realize the ReLU activation and as a consequence any multi-layer ReLU ANN. For the proof, please refer to Sections \ref{sec:RealizingReLU_app} and \ref{sec:approx_relu_network_using_SNN} in the Appendix.

\begin{theorem}
    \label{thm:approx_SNN_from_ANN}
    Let $L, d \in \N$, $[a,b]^d \subset \R^d$ and let $\Psi$ be an arbitrary ANN of depth $L$ and fixed width $d$ employing a ReLU non-linearity. 
    Then, there exists an LSNN $\Phi$ with $N(\Phi) = N(\Psi) + L(2d +3) - (2d+2)$ and $L(\Phi) = 3L-2$ that realizes $\mathcal{R}_\Psi$ on $[a,b]^d$.
\end{theorem}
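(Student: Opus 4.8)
The plan is to emulate $\Psi$ layer by layer, translating each ANN operation $y_l = \sigma(W^l y_{l-1} + B^l)$ into a small SNN gadget and then stitching these gadgets together with the concatenation calculus of Appendix \ref{section:SNN_calculus}. Two primitives suffice: an \emph{affine gadget} that reproduces a preactivation $W^l y_{l-1} + B^l$ in the firing-time domain, and the \emph{ReLU gadget} of Theorem \ref{thm:relu_approx} that applies $\sigma$ to a single scalar encoded in a firing time. The starting point is Equation \eqref{eqn:firing_time}, which shows that a neuron's firing time is, up to the positive normalization $\sum_{(u,v)} w_{uv}$, an affine combination of its presynaptic firing times shifted by the delays. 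Hence a single spiking layer already computes affine maps in time, and the real work is to realize \emph{arbitrary} coefficients and a prescribed bias while respecting the encoding of Remark \ref{remark:encoding}.

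First I would construct the affine gadget. For a fixed output coordinate I want the firing time to encode $\langle w, y_{l-1}\rangle + b$. Because \eqref{eqn:firing_time} divides by $\sum w_{uv}$, reproducing an unnormalized affine form with both positive and negative coefficients and a prescribed constant requires introducing one or more auxiliary reference neurons that fire at fixed, input-independent times; these supply the bias and let me tune the weight sum so that it stays positive, which is exactly the condition of Theorem \ref{thm:CPWL_mapping} and Remark \ref{rm:CondPosit} guaranteeing that the neuron actually fires. I would then verify that, with the encoding $t_u = T_{\text{in}} + x$, the output firing time equals $T' + (W^l y_{l-1} + B^l)$ for an appropriate reference time $T'$, so the preactivation is carried along in the same consistent temporal code.

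Next I would feed each preactivation into the two-layer ReLU gadget of Theorem \ref{thm:relu_approx}, obtaining $y_l = \sigma(W^l y_{l-1} + B^l)$ in firing time. Since the affine computation is itself a spiking layer, its last layer can be fused with the first layer of the following ReLU gadget, so one ANN layer costs three spiking layers rather than more; the output layer carries no $\sigma$ (cf. \eqref{eq:ANNcomp}) and is purely affine, which is where the boundary correction in the layer count originates. Stacking these blocks for $l = 1,\dots,L$ and composing via concatenation yields an SNN whose realization equals $\mathcal{R}_\Psi$ on $[a,b]^d$. The neuron and layer totals then follow by summing the gadget sizes — each of the $L$ blocks adds a constant-in-$d$ number of auxiliary neurons on top of the $d$ neurons emulating the ANN layer — and simplifying to $N(\Phi) = N(\Psi) + L(2d+3) - (2d+2)$ and $L(\Phi) = 3L-2$.

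The main obstacle is the affine gadget under the SNN constraints. The normalization by the necessarily positive weight sum in \eqref{eqn:firing_time} means arbitrary linear maps cannot be read off directly; negative ANN weights, the bias, and the positivity requirement of Theorem \ref{thm:CPWL_mapping} must be reconciled simultaneously using reference neurons and a compensating normalization. Equally delicate is the domain bookkeeping: each ReLU gadget is valid only for scalar inputs in a bounded interval, so I must track how the reference times $T_{\text{in}}, T_{\text{out}}$ and the range of preactivations propagate across layers and confirm that every intermediate firing time stays inside the window on which \eqref{eqn:firing_time} and the gadget construction remain valid. Once these are secured, exact emulation — not mere approximation — follows, because every step is an exact equality in the time domain.
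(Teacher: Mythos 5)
Your overall architecture mirrors the paper's proof: an affine gadget built from \eqref{eqn:firing_time} with auxiliary reference neurons, the two-layer ReLU gadget of Theorem \ref{thm:relu_approx} (in its consistent-encoding form, Proposition \ref{prop:ReLURealize}), and concatenation/parallelization to stitch together blocks of three spiking layers per ANN layer, with the purely affine last layer giving $3(L-1)+1 = 3L-2$. However, two steps as you describe them would not go through. First, for the affine gadget you invoke only positivity of the weight sum, which you say "guarantees that the neuron actually fires." Positivity does guarantee firing, but it does not guarantee that the realized firing time equals the affine map $\langle w, y\rangle + b$ on all of $[a,b]^d$: by Theorem \ref{prop:Partition}, a spiking neuron generically partitions its input domain into many linear regions, and on most of them only a proper subset of the inputs contributes to the firing time. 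The paper's mechanism is to choose the threshold $\theta$ so large (larger than the supremum of the potential over the time window in which input spikes can arrive) that the output neuron cannot fire before \emph{every} input spike has arrived; only then does the whole domain lie in the single region $R^{I\cup\{d+1\}}$, and the verification you postpone ("the output firing time equals $T' + (W^l y_{l-1}+B^l)$") actually holds. Your auxiliary neuron with compensating weight (the paper takes $w_{u_{d+1}v} = 1 - \sum_i c_i$, normalizing the weight sum to one) handles normalization, signs, and firing, but without the large-threshold argument the gadget computes a genuinely different, piecewise linear function.

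Second, your complexity bookkeeping is internally inconsistent and would not yield the stated constants. You claim each block adds "a constant-in-$d$ number of auxiliary neurons on top of the $d$ neurons emulating the ANN layer," which would give $N(\Phi) = N(\Psi) + O(L)$; yet the formula you then state has overhead $2d+3$ per layer, linear in $d$. In the paper's construction each ANN neuron is emulated by \emph{three} spiking neurons (the affine preactivation neuron, the input-dependent hidden neuron of the ReLU gadget, and the ReLU output neuron), so linear-in-$d$ overhead per layer is unavoidable. Moreover, to reach exactly $4d+3$ neurons per hidden-layer block one must additionally observe that the constant auxiliary neurons in the $d$ parallel copies $\Phi^\ell_1,\dots,\Phi^\ell_d$ have identical firing times and can be merged into a single shared set, with one extra auxiliary output neuron per concatenation interface; a naive application of Lemma \ref{lemma:parallelization} without this sharing gives roughly $6d$ neurons per block and hence a larger count than the theorem asserts. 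The structural plan is sound, but these two mechanisms --- the large threshold forcing a single linear region, and the auxiliary-neuron sharing in the parallelization --- are precisely what make the exact statement true.
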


\begin{hproof}
    Any multi-layer ANN with ReLU activation is simply an alternating composition of affine functions $A^\ell$ determined by the weights $W^\ell$ and biases $B^\ell$ in layer $\ell$ and a non-linear function represented by $\sigma$. Thus, to construct an LSNN $\Phi$ that realizes $\mathcal{R}_\Psi$, we first construct LSNNs that realize affine-linear functions and the ReLU non-linearity. Subsequently, we compose these subnetworks to obtain $\Phi$. Thereby, we realize $A^\ell$ by an LSNN with the same weights $W^\ell$, which may be negative. Therefore, in the LSNN construction, we rely on the value of the threshold parameter, which may depend on $[a,b]$, and auxiliary neurons with appropriate weights that ensure the firing of the output neurons and the desired (continuous) realization.
\end{hproof}

\begin{remark}
The result can be generalized to ANNs with varying widths that employ any type of PWL activation function. We note that the encoding scheme that converts the analog values into the time domain plays a crucial role. 
We construct a two-layer LSNN that realizes $\sigma$ via the encoding scheme  ($T_{\text{in}} + \cdot$) and ($T_{\text{out}} + \cdot$). At the same time, the encoding scheme ($T_{\text{in}} - \cdot$) and ($T_{\text{out}} - \cdot$) fails in the two-layer case, whereas utilizing an inconsistent input and output encoding enables us to construct a one-layer LSNN that realizes $\sigma$. This shows that not only the network but also the applied encoding scheme is highly relevant. For details, we refer to Appendix \ref{sec:RealizingReLU_app}.
\end{remark}

It is well known that ReLU-ANNs not only realize CPWL mappings but that every CPWL function can be represented by ReLU-ANNs \citep{arora_2016_understandingNN}. Theorem \ref{thm:approx_SNN_from_ANN} implies that LSNNs are as expressive as any ReLU-ANN, i.e., LSNNs can represent every ReLU-ANN and thereby every CPWL function with similar complexity. However, in a hypothetical real-world implementation, which certainly includes some noise, the constructed LSNN is not necessarily robust with respect to input perturbation.  Additionally, the complexity of an LSNN can be captured in other ways than in terms of the number of computational units and layers, e.g., the total number of spikes emitted in LSNNs is related to its energy consumption since emitting spikes consumes energy. Hence, the minimum number of spikes needed to realize a given function class may be a reasonable complexity measure with regard to energy efficiency for SNNs. Further research in these directions is necessary to analyze the behaviour under noise and provide error estimations as well as to evaluate the complexity of LSNNs via different measures with their benefits and drawbacks.

\subsection{Bounds on the complexity of ReLU-ANNs for expressing LSNNs}
\label{subsec:differences}
In this section, we further explore the differences in the computational structure between LSNNs and ReLU-ANNs. An already observed major distinction is the ability of LSNNs to realize discontinuous functions. Aside from this fact, can we establish dissimilarities when restricted to continuous realizations?
Since ReLU-ANNs can represent any CPWL mapping, they can realize any LSNN with a CPWL realization, in particular, LSNNs with positive weights and a CPWL encoding scheme. Hence, the key difference in the realization of arbitrary CPWL mappings may be the necessary size and complexity of the respective ANN and LSNN. To that end, we give upper bounds on the complexity of ReLU-ANNs needed to realize corresponding LSNNs. The first step in establishing the result is the study of the number of linear regions that LSNNs generate.

The number of linear regions can be seen as a measure of the flexibility and expressivity of the associated CPWL function. Similarly, we can measure the expressivity of an ANN by the number of linear regions of its realization. The connection of the depth, width, and activation function of an ANN to the maximum number of its linear regions is well-established, e.g., with increasing depth the number of linear regions can grow exponentially in the number of parameters of an ANN \citep{montafur2014linearregions,arora_2016_understandingNN,linearregions_unser_2022}. In the following, we observe a distinct scaling behaviour for LSNNs. 
\begin{lemma}\label{prop:Partition}
    Let $\Phi$ be a one-layer LSNN with a single output neuron $v$ and input neurons $u_1, \dots, u_d$. Then $t_\Phi$ partitions the input domain into at most $2^d-1$ linear regions. The maximal number of linear regions is attained if and only if all synaptic weights are positive.
\end{lemma}

\begin{proof}
By Theorem \ref{thm:CPWL_mapping}, we observe that $t_\Phi$ is a PWL mapping. It can be inferred via \eqref{eqn:firing_time_specific} that each linear region corresponds to a subset of input neurons responsible for the firing of $v$ on that specific domain. Hence, the number of regions is bounded by the number of non-empty subsets of $\{u_1,\dots,u_d\}$, i.e., $2^d-1$. Now, observe that any subset of input neurons causes a spike in $v$ if and only if the sum of their weights is positive. Otherwise, inputs from the corresponding region cannot trigger a spike in $v$ since their net contribution is negative, i.e., the potential does not reach the threshold $\theta_v$. Hence, the maximal number of regions is attained if and only if all weights are positive, and thereby the sum of weights of any subset of input neurons is positive as well.
\end{proof}

A one-layer ReLU-ANN with one output neuron will partition the input domain into at most two linear regions, independent of the dimension of the input. In contrast, for a one-layer LSNN with one output neuron, the maximum number of linear regions scales exponentially in the input dimension.
In the case of LSNNs, non-linearity is an intrinsic property of the model and emerges from the subset of neurons that affect the firing time of the output neuron, whereas in ANNs a non-linear activation is directly applied to the output neuron. By shifting the non-linearity and applying it to the input, ANNs could exhibit the same exponential scaling of the linear regions as LSNNs. However, this change has rather a detrimental effect on the expressivity since the partitioning of the input domain is fixed and independent of the parameters of the ANN. The flexibility of LSNNs to generate arbitrary linear regions is to a certain extent limited, albeit not entirely restricted as in the adjusted ANN; this is exemplarily demonstrated for a two-dimensional input space in Appendix \ref{sec:realizationSNNs_appendix}.

Analogously to the result in Theorem \ref{thm:approx_SNN_from_ANN}, a natural question is: what is the complexity of the ReLU ANN needed to emulate a given LSNN? The full power of ANN comes into play with large numbers of layers, however, the result in Lemma \ref{prop:Partition} suggests that a shallow LSNN can be as expressive as a deep ReLU network in terms of the number of linear regions.
In the following, we give upper bounds on depth and number of computational units required for a ReLU-ANN to express a one-layer LSNN with $d-$dimensional input.

\begin{theorem}\label{prop:lower_bound}
For $d \geq 2$, $\ell := \lceil\log_2(d+1)\rceil + 1$. Let $\Phi$ be a one-layer LSNN with one output neuron $v$ and $d$ input neurons $u_1, \dots, u_d$ with $w_{u_iv} \in \R_{>0}$ for $i \in [d]$. Then, 
\begin{enumerate}
\item[(a)] $t_\Phi$ can be realized by a ReLU-ANN $\Psi$ with $L(\Psi) = \ell$ and $N(\Psi) \in \mathcal{O}(\ell\cdot 2^{2d^3 + 3d^2 + d})$. 
\item[(b)] $t_\Phi$ can be realized by a ReLU-ANN $\Psi$ with $L(\Psi) \in \mathcal{O}(d)$ and $N(\Psi) \in \mathcal{O}(8^d)$.
\end{enumerate}
\end{theorem}

\begin{proof}
    Since $\Phi$ consists of only positive weights, the firing map $t_\Phi$ is via Theorem \ref{thm:CPWL_mapping} and Lemma \ref{prop:Partition} a CPWL mapping with $2^d-1$ linear regions. Using upper bounds on the size of ReLU-ANNs to realize CPWL mappings with a fixed number of linear regions, we obtain the given bounds. Thereby, the result (a) follows from Theorem 9 in \cite{hertrich2023lower}, and (b) follows from Theorem 1 in \cite{chen2023improved}. 
\end{proof}

The complexity bounds in (a) and (b) of Theorem \ref{prop:lower_bound} are connected to the number of linear regions of the CPWL function that the given LSNN realizes. In (a), to represent this CPWL function with lower depth, a substantial number of units in each layer might be necessary. Conversely, in (b), deep but less wide networks can achieve the same function. To the best of our knowledge, we are not aware of any lower bounds on the depth of the ReLU-ANN in terms of the number of linear regions when expressing any CPWL function. Extending the bounds to multi-layer LSNNs is an important step and is left for future investigation.

Via Theorem \ref{thm:approx_SNN_from_ANN} and Theorem \ref{prop:lower_bound}, we provide upper bounds on the size of LSNNs to realize ReLU networks and vice versa. Although the bounds presented in Theorem \ref{thm:approx_SNN_from_ANN} and Theorem \ref{prop:lower_bound} are not optimal, however, note that the bound on the size of ReLU-ANNs to emulate one-layer LSNNs grows exponentially in the input dimension whereas for LSNNs to emulate $L$-layer ReLU-ANNs, the bound scales linearly in the input dimension.
These bounds (and their derivation) suggest that LSNNs and ReLU-ANNs offer distinct benefits for realizing certain types of CPWL functions. 
These observations highlight the need to establish the associated lower bounds. This will not only shed light on the types of functions for which LSNNs are better suited in terms of emulation / approximation capabilities than ReLU-ANNs but also provide valuable insights into their computational power.

\section{Discussion}
\label{section:discussion}
The central aim of this paper is to study and compare the expressive power of SNNs and ANNs employing any PWL activation function. 
Our expressivity result in Theorem \ref{thm:approx_SNN_from_ANN} implies that LSNNs can approximate any function with the same accuracy and a certain complexity overhead as (deep) ANNs employing a piecewise linear activation function, given the response function satisfies some basic assumptions.
Most related to Theorem \ref{thm:approx_SNN_from_ANN} are the results in \citet{relutosnn2022ana}. 
Under certain assumptions, the authors define a one-to-one neuron mapping that converts a trained ReLU network to a corresponding SNN consisting of integrate and fire neurons by a non-linear transformation of parameters. However, significant distinctions exist between the approaches, particularly in terms of the chosen model, for instance, with the handling of the threshold parameter. 
In terms of methodology, we introduce an auxiliary neuron to ensure the firing of neurons even when a corresponding ReLU neuron exhibits zero activity. This diverges from their approach, which employs external current and a special parameter to achieve similar outcomes. We study the differences in the structure of computations between ANNs and SNNs, whereas in \citet{relutosnn2022ana}, only the conversion of ANNs to SNNs is examined and not vice versa.

Rather than approximating some function space by emulating a known construction for ReLU networks, one could also achieve optimal approximations by leveraging the intrinsic capabilities of LSNNs instead. The findings in Lemma \ref{prop:Partition} and Theorem \ref{prop:lower_bound} indicate that the latter approach may indeed be beneficial in terms of the complexity of the architecture in certain circumstances. However, we point out that finding optimal architectures for approximating different classes of functions is not the focal point of our work. 
The significance of our results lies in investigating theoretically the approximation and expressivity capabilities of SNNs, highlighting their potential as an alternative computational model for complex tasks. 
Extending the model of an LSNN neuron by incorporating, e.g., multiple spikes of a neuron, may yield an improvement in our results. However, by increasing the complexity of the model the analysis also tends to be more elaborate. In the aforementioned case of multiple spikes the threshold function becomes important so that additional complexity when approximating some target function is introduced since one would have to consider the coupled effect of response and threshold functions. Similarly, the choice of the response function and the frequency of neuron firings will surely influence the approximation results and we leave this for future work.

\paragraph{Limitations} 
We study similarities and differences in the structure of computations between ANNs and LSNNs and theoretically show that LSNNs can be as expressive as ReLU-ANNs.
However, achieving similar results in practice heavily relies on the effectiveness of the employed training algorithms.
The implementation of efficient learning algorithms with weights, delays, and thresholds as programmable parameters is left for future work.
In this work, our choice of model resides on theoretical considerations and not on practical considerations regarding implementation. However, there might be other models of spiking neurons that are more apt for implementation purposes --- see e.g. \citet{relutosnn2022ana} and \citet{temporal_single_spike_backprop2020_comsa}.
Furthermore, in reality, due to the ubiquitous sources of noise in the spiking neurons, the firing activity of a neuron is not deterministic. 
For mathematical simplicity, we perform our analysis in a noise-free case.
Generalizing to the case of noisy spiking neurons is important (for instance concerning the aforementioned implementation in noisy environments) and may lead to further insights into the model.

\acks{
The authors would like to thank Philipp Petersen for helpful discussion and feedback. M. Singh is supported by the DAAD programme Konrad Zuse Schools of Excellence in Artificial Intelligence, sponsored by the Federal Ministry of Education and Research. 

G. Kutyniok acknowledges support from LMUexcellent, funded by the Federal Ministry of Education and Research (BMBF) and the Free State of Bavaria under the Excellence Strategy of the Federal Government and the Länder as well as by the Hightech Agenda Bavaria. Further, G. Kutyniok was supported in part by the DAAD programme Konrad Zuse Schools of Excellence in Artificial Intelligence, sponsored by the Federal Ministry of Education and Research. G. Kutyniok also acknowledges support from the Munich Center for Machine Learning (MCML) as well as the German Research Foundation under Grants DFG-SPP-2298, KU 1446/31-1 and KU 1446/32-1 and under Grant DFG-SFB/TR 109, Project C09 and the German Federal Ministry of Education and Research (BMBF) under Grant MaGriDo.}

\bibliography{bib_file}

\newpage

\appendix


\section{Proofs}
\label{section:Appendix}

\paragraph{Outline} 
We start by introducing the spiking network calculus in Section \ref{section:SNN_calculus} to compose and parallelize different networks. 
In Section \ref{sec:realizationSNNs_appendix}, we characterize the firing maps of LSNNs. In particular, we 
show that the firing maps of LSNNs are PWL and under stronger assumptions continuous, increasing, and concave. In Section \ref{sec:RealizingReLU_app}, we construct an LSNN that emulates the ReLU non-linearity, and subsequently in Section \ref{sec:approx_relu_network_using_SNN}, we prove that an LSNN can realize the output of any ReLU network and simultaneously provide bounds on the required size of the LSNN.

\subsection{Spiking neural network calculus} 
\label{section:SNN_calculus}
It can be observed from Definition \ref{definition:encoding} that both inputs and outputs of LSNNs are encoded in a unified format.
This characteristic is crucial for concatenating/parallelizing two spiking network architectures that further enable us to attain compositions of network realizations.  

We operate in the following setting: Let $L_1$, $L_2, d_1, d_2, d_1^\prime, d_2^\prime \in \N$. Consider two LSNNs $\Phi_1$, $\Phi_2$ given by 
\begin{equation*}
    \Phi_i = ((W_1^i, D_1^i, \Theta_1^i), \dots, (W_{L_i}^i, D_{L_i}^i, \Theta_{L_i}^i)), \quad i= 1,2,    
\end{equation*}
with input domains $[a_1,b_1]^{d_1} \subset \R^{d_1}$, $[a_2,b_2]^{d_2} \subset \R^{d_2}$ and output dimension $d_1^\prime, d_2^\prime$, respectively. Denote the input neurons by $u_1,\dots, u_{d_i}$ with respective firing times $t^i_{u_j}$. 
By Definition \ref{def:firingMap} and \ref{definition:encoding}, we can express the firing times of the input neurons as 
\begin{align}\label{eqn:firingtimephi1phi2}
    t_u^1(x) := (t^1_{u_1}, \dots, t^1_{u_{d_1}})^T &= T^1_{\text{in}} + x \quad \text{ for } x \in [a_1,b_1]^{d_1},   \nonumber\\
    t_u^2(x) := (t^2_{u_1}, \dots, t^2_{u_{d_2}})^T &= T^2_{\text{in}} + x \quad \text{ for } x \in [a_2,b_2]^{d_2}
\end{align}  
and the realization of the networks as
\begin{align}\label{eq:realizationphi1phi2}
    \cR_{\Phi_1}(x) &= - T^1_{\text{out}} + t_{\Phi_1}(t^1_u(x)) \quad \text{ for } x \in [a_1,b_1]^{d_1}, \nonumber\\
    \cR_{\Phi_2}(x) &= - T^2_{\text{out}} + t_{\Phi_2}(t^2_u(x)) \quad \text{ for } x \in [a_2,b_2]^{d_2}
\end{align}
for some constants $T^1_{\text{in}} \in \R^{d_1}$, $T^2_{\text{in}} \in \R^{d_2}$, $T^1_{\text{out}} \in \R^{d^\prime_1}$, $T^2_{\text{out}} \in \R^{d^\prime_2}$.

We define the concatenation of the two networks in the following way. 

\begin{definition} (Concatenation)
    \label{defn:concatenation}
    Let $\Phi_1$ and $\Phi_2$ be such that the input layer of $\Phi_1$ has the same dimension as the output layer of $\Phi_2$, i.e., $d_2^\prime = d_1$. Then, the concatenation of $\Phi_1$ and $\Phi_2$, denoted as $\Phi_1 \bullet \Phi_2$, represents the $(L_1 + L_2)$-layer network
    \begin{equation*}
    \Phi_1\bullet \Phi_2 := ((W_1^2, D_1^2, \Theta_1^2), \dots, (W_{L_2}^2, D_{L_2}^2, \Theta_{L_2}^2), (W_1^1, D_1^1, \Theta_1^1), \dots, (W_{L_1}^1, D_{L_1}^1, \Theta_{L_1}^1)).
\end{equation*}
\end{definition}

\begin{lemma}\label{lemma:concatenation} 
Let $d_2^\prime = d_1$ and fix $T_{\text{in}} = T^2_{\text{in}}$ and $T_{\text{out}} = T^1_{\text{out}}$. If $T^2_{\text{out}} = T^1_{\text{in}}$ and $\cR_{\Phi_2}([a_2,b_2]^{d_2}) \subset [a_1,b_1]^{d_1}$, then 
\begin{equation*}
    \cR_{\Phi_1\bullet \Phi_2}(x) = \cR_{\Phi_1}(\cR_{\Phi_2}(x)) \quad \text{for all} \:\: x \in [a,b]^{d_2}
\end{equation*}
with respect to the reference times $T_{\text{in}}, T_{\text{out}}$.
Moreover, $\Phi_1\bullet \Phi_2$ is composed of $N(\Phi_1) + N(\Phi_2) - d_1$ computational units.
\end{lemma}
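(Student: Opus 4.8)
The plan is to track the firing times layer by layer through $\Phi_1 \bullet \Phi_2$, exploiting the fact that, by Definition \ref{defn:concatenation}, the concatenated network first executes the layers of $\Phi_2$ and then those of $\Phi_1$, with the output neurons of $\Phi_2$ serving as the input neurons of $\Phi_1$. Since the underlying graph is acyclic and every neuron of the $\Phi_1$-block lies strictly downstream of the $\Phi_2$-block, each firing time inside the $\Phi_2$-block is determined via \eqref{eqn:firing_time} from exactly the same presynaptic data as in the standalone network $\Phi_2$. Hence, feeding an input $x \in [a,b]^{d_2}$, encoded as $T_{\text{in}} + x = T^2_{\text{in}} + x$ by the choice $T_{\text{in}} = T^2_{\text{in}}$, produces at the output neurons of the $\Phi_2$-block the firing times $T^2_{\text{out}} + \cR_{\Phi_2}(x)$ exactly as in \eqref{eq:realizationphi1phi2}.

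The key step is to recognize these firing times as a valid encoded input for the $\Phi_1$-block. Under the hypothesis $T^2_{\text{out}} = T^1_{\text{in}}$, the firing times of the neurons shared between the two blocks equal $T^1_{\text{in}} + \cR_{\Phi_2}(x)$, which is precisely the encoding that $\Phi_1$ assigns to the analog input $z := \cR_{\Phi_2}(x)$ according to Remark \ref{remark:encoding}. The inclusion $\cR_{\Phi_2}([a_2,b_2]^{d_2}) \subset [a_1,b_1]^{d_1}$ guarantees that $z$ lies in the admissible input domain of $\Phi_1$, so that $\cR_{\Phi_1}(z)$ is well-defined. Applying \eqref{eqn:firing_time} repeatedly through the $\Phi_1$-block then yields the same firing times as the standalone $\Phi_1$ would on input $z$; in particular the output neurons of $\Phi_1 \bullet \Phi_2$ fire at $T^1_{\text{out}} + \cR_{\Phi_1}(z)$. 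Subtracting $T_{\text{out}} = T^1_{\text{out}}$ as prescribed in Definition \ref{def:realization} gives $\cR_{\Phi_1 \bullet \Phi_2}(x) = \cR_{\Phi_1}(\cR_{\Phi_2}(x))$, as claimed. The neuron count then follows by inspecting Definition \ref{defn:concatenation}: the $d_2^\prime = d_1$ output neurons of $\Phi_2$ are identified with the $d_1$ input neurons of $\Phi_1$, so these $d_1$ neurons are counted in both $N(\Phi_1)$ and $N(\Phi_2)$; removing this overlap yields $N(\Phi_1 \bullet \Phi_2) = N(\Phi_1) + N(\Phi_2) - d_1$.

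I expect the main obstacle to be the careful justification that the dynamics of the two blocks decouple in the right way, namely that no neuron of the $\Phi_1$-block feeds back into the $\Phi_2$-block (which the feedforward, acyclic layering ensures) and that the single-spike firing rule \eqref{eqn:firing_time} depends only on presynaptic firing times, so that each block behaves exactly as in isolation once its inputs are fixed. The roles of the alignment condition $T^2_{\text{out}} = T^1_{\text{in}}$ and the domain inclusion must be stated precisely, as together they are what make the shared neurons' firing times simultaneously a valid output encoding of $\Phi_2$ and a valid input encoding of $\Phi_1$.
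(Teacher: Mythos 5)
Your proposal is correct and follows essentially the same route as the paper's proof: both track the input through the $\Phi_2$-block to obtain output firing times $T^2_{\text{out}} + \cR_{\Phi_2}(x)$, use the alignment $T^2_{\text{out}} = T^1_{\text{in}}$ together with the domain inclusion to recognize these as a valid input encoding for $\Phi_1$, and then read off the composed realization after subtracting $T_{\text{out}} = T^1_{\text{out}}$, with the neuron count following from the identification of the $d_1$ shared neurons. The paper compresses this into a single chain of equalities via \eqref{eqn:firingtimephi1phi2} and \eqref{eq:realizationphi1phi2}, whereas you additionally spell out why the two blocks' dynamics decouple, which the paper leaves implicit.
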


\begin{proof}
It is straightforward to verify via the construction that the network $\Phi_1 \bullet \Phi_2$ is composed of $N(\Phi_1) + N(\Phi_2) - d_1$ computational units. Moreover, under the given assumptions $\cR_{\Phi_1} \circ \cR_{\Phi_2}$ is well-defined so that \eqref{eqn:firingtimephi1phi2} and \eqref{eq:realizationphi1phi2} imply
\begin{align*}
    \cR_{\Phi_1 \bullet \Phi_2}(x) &= -T_{\text{out}} +  t_{\Phi_1}(t_{\Phi_2}(T_{\text{in}} + x)) = -T^1_{\text{out}} + t_{\Phi_1}(t_{\Phi_2}(T^2_{\text{in}} + x)) \\
    &= -T^1_{\text{out}}  + t_{\Phi_1}(t_{\Phi_2}(t^2_u(x)))  = -T^1_{\text{out}}  + t_{\Phi_1}(T^2_{\text{out}} + \cR_{\Phi_2}(x)) \\
    &= - T^1_{\text{out}} + t_{\Phi_1}(T^1_{\text{in}} + \cR_{\Phi_2}(x)) =  - T^1_{\text{out}} + t_{\Phi_1}(t^1_u(\cR_{\Phi_2}(x)))\\
    &=  \cR_{\Phi_1}(\cR_{\Phi_2}(x)) \quad \text{ for } x \in [a_2,b_2]^{d_2}.
\end{align*}
\end{proof}

In addition to concatenating networks, we also perform parallelization operation on LSNNs.
\begin{definition}(Parallelization)\label{defn:parallelization}
Let $\Phi_1$ and $\Phi_2$ be such that they have the same depth and input dimension, i.e., $L_1 = L_2 =: L$ and $d_1 = d_2 =: d$. Then, the parallelization of $\Phi_1$ and $\Phi_2$, denoted as $P(\Phi_1, \Phi_2)$, represents the $L$-layer network with $d$-dimensional input
\begin{equation*}
    P(\Phi_1, \Phi_2) := ((\Tilde{W}_1, \Tilde{D}_1, \Tilde{\Theta}_1), \dots, (\Tilde{W}_L, \Tilde{D}_L, \Tilde{\Theta}_L)), 
\end{equation*}
where 
\begin{equation*}
\Tilde{W}_1 = \begin{pmatrix}
W_1^1 & W_1^2\\[3pt]
\end{pmatrix}, \quad 
\Tilde{D}_1 = \begin{pmatrix}
D_1^1 & D_1^2\\[3pt]
\end{pmatrix}, \quad    
\Tilde{\Theta}_1 = \begin{pmatrix}
\Theta_1^1\\[3pt]
\Theta_1^2\\
\end{pmatrix}
\end{equation*}
and 
\begin{equation*}
\Tilde{W}_l = \begin{pmatrix}
W_l^1 & 0\\
0 & W_l^2\\
\end{pmatrix}, \quad
\Tilde{D}_l = \begin{pmatrix}
D_l^1 & 0\\
0 & D_l^2\\
\end{pmatrix}, \quad
\Tilde{\Theta}_l = \begin{pmatrix}
\Theta_l^1\\[2pt]
\Theta_l^2\\
\end{pmatrix}, \quad \text{for} \:\: 1 < l \leq L.
\end{equation*}
\end{definition}

\begin{lemma}
\label{lemma:parallelization}
Let $d:= d_2 = d_1$ and fix $T_{\text{in}} := T^1_{\text{in}}$, $T_{\text{out}} := (T^1_{\text{out}},T^2_{\text{out}})$, $a:=a_1$ and $b:=b_1$. If $T^2_{\text{in}} = T^1_{\text{in}}$, $T^2_{\text{out}} = T^1_{\text{out}}$ and $a_1=a_2$, $b_1=b_2$, then
\begin{equation*}
    \cR_{P(\Phi_1, \Phi_2)}(x) = (\cR_{\Phi_1}(x), \cR_{\Phi_2}(x)) \quad \text{ for }  x \in [a,b]^d 
\end{equation*}
with respect to the reference times $T_{\text{in}}, T_{\text{out}}$.
Moreover, $P(\Phi_1, \Phi_2)$ is composed of $N(\Phi_1) + N(\Phi_2) - d$ computational units.
\end{lemma}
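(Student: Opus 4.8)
The plan is to mirror the argument used for concatenation in Lemma \ref{lemma:concatenation}, exploiting the block structure of the parallelized weight, delay, and threshold matrices prescribed in Definition \ref{defn:parallelization}. First I would settle the neuron count. Since $\Phi_1$ and $\Phi_2$ share a common input dimension $d$, the construction reuses a single input layer of $d$ neurons rather than duplicating it, while retaining all remaining neurons of both networks. Hence $N(P(\Phi_1,\Phi_2)) = N(\Phi_1) + N(\Phi_2) - d$, which is immediate from the definition.

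For the realization identity, the key observation is that the block-diagonal form of $\tilde{W}_l$ and $\tilde{D}_l$ for $1 < l \leq L$ prevents any interaction between the two branches beyond the shared input layer. I would proceed by induction on the layer index $l$. In the first layer, the horizontal concatenation $\tilde{W}_1 = (W_1^1\ W_1^2)$ and $\tilde{D}_1 = (D_1^1\ D_1^2)$ ensures that each first-layer neuron of the $\Phi_1$-branch receives exactly the synaptic contributions prescribed by $W_1^1, D_1^1$, and likewise for the $\Phi_2$-branch. Since $T^2_{\text{in}} = T^1_{\text{in}}$ together with $a_1 = a_2$, $b_1 = b_2$ guarantees that the shared input neurons fire at the common times $T_{\text{in}} + x$ for $x \in [a,b]^d$, formula \eqref{eqn:firing_time} reproduces precisely the first-layer firing times of $\Phi_1$ and $\Phi_2$ computed separately.

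For the inductive step, the block-diagonal structure means a neuron in the $\Phi_i$-branch of layer $l$ has incoming synapses only from neurons in the $\Phi_i$-branch of layer $l-1$. Applying \eqref{eqn:firing_time} to such a neuron therefore involves exactly the same weights, delays, threshold, and presynaptic firing times as in the standalone network $\Phi_i$; by the induction hypothesis these presynaptic firing times already coincide, so the neuron's firing time in $P(\Phi_1,\Phi_2)$ equals its firing time in $\Phi_i$. Carrying the induction to the output layer yields output firing times $(t_v^1, t_v^2)$, and subtracting the stacked reference time $T_{\text{out}} = (T^1_{\text{out}}, T^2_{\text{out}})$ produces $(\cR_{\Phi_1}(x), \cR_{\Phi_2}(x))$, as claimed.

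I do not anticipate a genuine obstacle; the whole argument reduces to verifying that the prescribed block structure enforces independence of the two branches, so that $\Phi_1$ and $\Phi_2$ effectively run in isolation on a common input. The only point needing mild care is the first layer, where the branches are coupled through the shared input neurons — but since that coupling is merely the common firing times $T_{\text{in}} + x$, which equal both $T^1_{\text{in}} + x$ and $T^2_{\text{in}} + x$ under the hypothesis, the first-layer computations separate cleanly and the induction gets off the ground.
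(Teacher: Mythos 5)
Your proof is correct and follows essentially the same approach as the paper: shared input layer accounts for the neuron count, and the block structure of the parallelized matrices ensures the two branches compute independently, so the output firing times minus the stacked reference times give the pair of realizations. The only difference is one of detail — the paper treats the branch independence as immediate from the construction and writes the realization identity as a two-line computation, whereas you justify it explicitly by induction on layers, which is a sound (and arguably more careful) elaboration of the same argument.
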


\begin{proof}
    The number of computational units is an immediate consequence of the construction. Since the input domains of $\Phi_1$ and $\Phi_2$ agree, \eqref{eqn:firingtimephi1phi2} and \eqref{eq:realizationphi1phi2} show that
\begin{align*}
    \cR_{P(\Phi_1, \Phi_2)}(x) &= -T_{\text{out}} + (t_{\Phi_1}(T_{\text{in}} + x), t_{\Phi_2}(T_{\text{in}} + x))\\
    &= (-T^1_{\text{out}} + t_{\Phi_1}(T^1_{\text{in}} + x), -T^2_{\text{out}} + t_{\Phi_2}(T^2_{\text{in}} + x)) \\
    &=(-T^1_{\text{out}} + t_{\Phi_1}(t^1_u(x)), -T^2_{\text{out}} + t_{\Phi_2}(t_u^2(x)))\\
    &= (\cR_{\Phi_1}(x), \cR_{\Phi_2}(x))  \quad \text{ for } x \in [a,b]^{d}.
\end{align*}

\end{proof}

\begin{remark}
    Note that parallelization and concatenation can be straightforwardly extended (recursively) to a finite number of networks. Additionally, more general forms of parallelization and concatenations of networks, e.g., parallelization of networks with different depths, can be established. However, for the constructions presented in this work, the introduced notions suffice.
\end{remark}

\subsection{Characterization of functions expressed by LSNNs}
\label{sec:realizationSNNs_appendix}

\subsubsection{Spiking neuron with two inputs}
First, we provide a simple toy example to demonstrate the dynamics of an LSNN neuron. Let $v$ be an LSNN neuron with two input neurons $u_1,u_2$. Denote the associated weights and delays by $w_{u_i v} \in \R$ and $d_{u_i v} \geq 0$, respectively, and the threshold of $v$ by $\theta_v >0$. A spike emitted from $v$ could then be caused by either $u_1$ or $u_2$ or a combination of both. Each possibility corresponds to a linear region in the input space $\R^2$. We consider each case separately under Assumption I, i.e., $\delta$ in \eqref{eq:Response_cond} is arbitrarily large, and we discuss the implications of this assumption in more detail after presenting the different cases. 

\textbf{Case 1}: $u_1$ causes $v$ to spike before a potential effect from $u_2$ reaches $v$. Note that this can only happen if $w_{u_1 v} > 0$ and
\begin{equation*}
    t_{u_2} + d_{u_2 v} \geq t_v = \frac{\theta_v}{w_{u_1 v}} +t_{u_1} + d_{u_1 v},
\end{equation*}
where we applied \eqref{eqn:firing_time} and \eqref{eqn:firing_time_specific}, and $t_z$ represents the firing time of a neuron $z$. Solving for $t_{u_2}$ leads to
\begin{equation*}
    t_{u_2} \geq \frac{\theta_v}{w_{u_1 v}} +t_{u_1} + d_{u_1 v} - d_{u_2 v}.
\end{equation*}

\textbf{Case 2}: An analogous calculation shows that 
\begin{equation*}
    t_{u_2} \leq  - \frac{\theta_v}{w_{u_2 v}} +t_{u_1} + d_{u_1 v} - d_{u_2 v},
\end{equation*}
whenever $u_2$ causes $v$ to spike before a potential effect from $u_1$ reaches $v$.

\textbf{Case 3}: The remaining possibility is that spikes from $u_1$ and $u_2$ influence the firing time of $v$. Then, the following needs to hold: $w_{u_1 v} + w_{u_2 v} > 0$ and 
\begin{align*}
    t_{u_1} + d_{u_1 v} &< t_v = \frac{\theta_v}{w_{u_1 v}+ w_{u_2 v}} + \sum_i \frac{w_{u_i v}}{w_{u_1 v}+ w_{u_2 v}} (t_{u_i} + d_{u_i v}) \quad \text{ and }\\
    t_{u_2} + d_{u_2 v} &< t_v = \frac{\theta_v}{w_{u_1 v}+ w_{u_2 v}} +\sum_i \frac{w_{u_i v}}{w_{u_1 v}+ w_{u_2 v}} (t_{u_i} + d_{u_i v}).    
\end{align*}
This yields
\begin{align*}
    t_{u_2} &\begin{cases}
                > - \frac{\theta_v}{w_{u_2 v}} +t_{u_1} + d_{u_1 v} - d_{u_2 v},& \text{ if } \frac{w_{u_2 v}}{w_{u_1 v}+ w_{u_2 v}} > 0\\[7pt]
                < - \frac{\theta_v}{w_{u_2 v}} +t_{u_1} + d_{u_1 v} - d_{u_2 v},& \text{ if } \frac{w_{u_2 v}}{w_{u_1 v}+ w_{u_2 v}} < 0 
            \end{cases},
            \\[5pt]
            \text{respectively}&\\[5pt]
    t_{u_2} &\begin{cases}
                < \frac{\theta_v}{w_{u_1 v}} +t_{u_1} + d_{u_1 v} - d_{u_2 v},& \text{ if } \frac{w_{u_1 v}}{w_{u_1 v}+ w_{u_2 v}} > 0\\[7pt]
                > \frac{\theta_v}{w_{u_1 v}} +t_{u_1} + d_{u_1 v} - d_{u_2 v},& \text{ if } \frac{w_{u_1 v}}{w_{u_1 v}+ w_{u_2 v}} < 0
            \end{cases}.       
\end{align*}

\begin{example}
\label{example:linear_regions}
    In a simple setting with $\theta_v=w_{u_i v}= d_{u_2 v}  =1$ and $d_{u_1 v}=2 $, the above considerations imply the following firing time of $v$ on the corresponding linear regions (see Figure \ref{fig:linRegEx}):
    \begin{equation*}
        t_v = \begin{cases}
                    t_{u_1} + 3, &\text{ if } t_{u_2} \geq t_{u_1} + 2 \\[4pt]
                    t_{u_2} + 2, &\text{ if } t_{u_2} \leq t_{u_1} \\[4pt]
                    \frac{1}{2}(t_{u_1}+t_{u_2}) + 2, &\text{ if } t_{u_1} < t_{u_2} < t_{u_1} + 2
              \end{cases}.
    \end{equation*}
\end{example}

\begin{figure}[ht] 
    \centering
    \begin{subfigure}[b]{0.44\linewidth}
        \includegraphics[width=\linewidth]{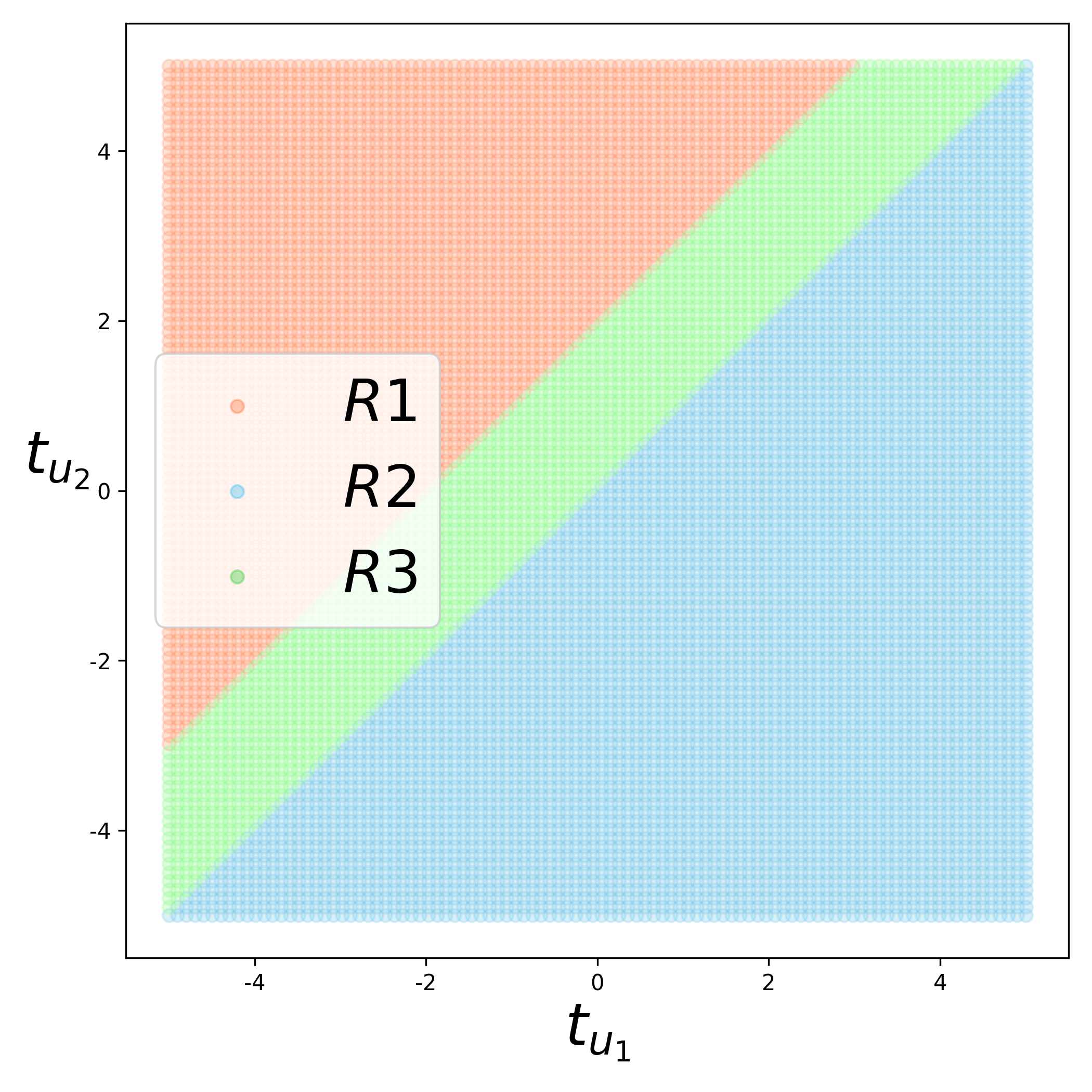}
        \caption{}
        \label{subfig:CPWL_regions}
    \end{subfigure}
    \begin{subfigure}[b]{0.53\linewidth}
        \includegraphics[width=\linewidth, height = 6.8cm]{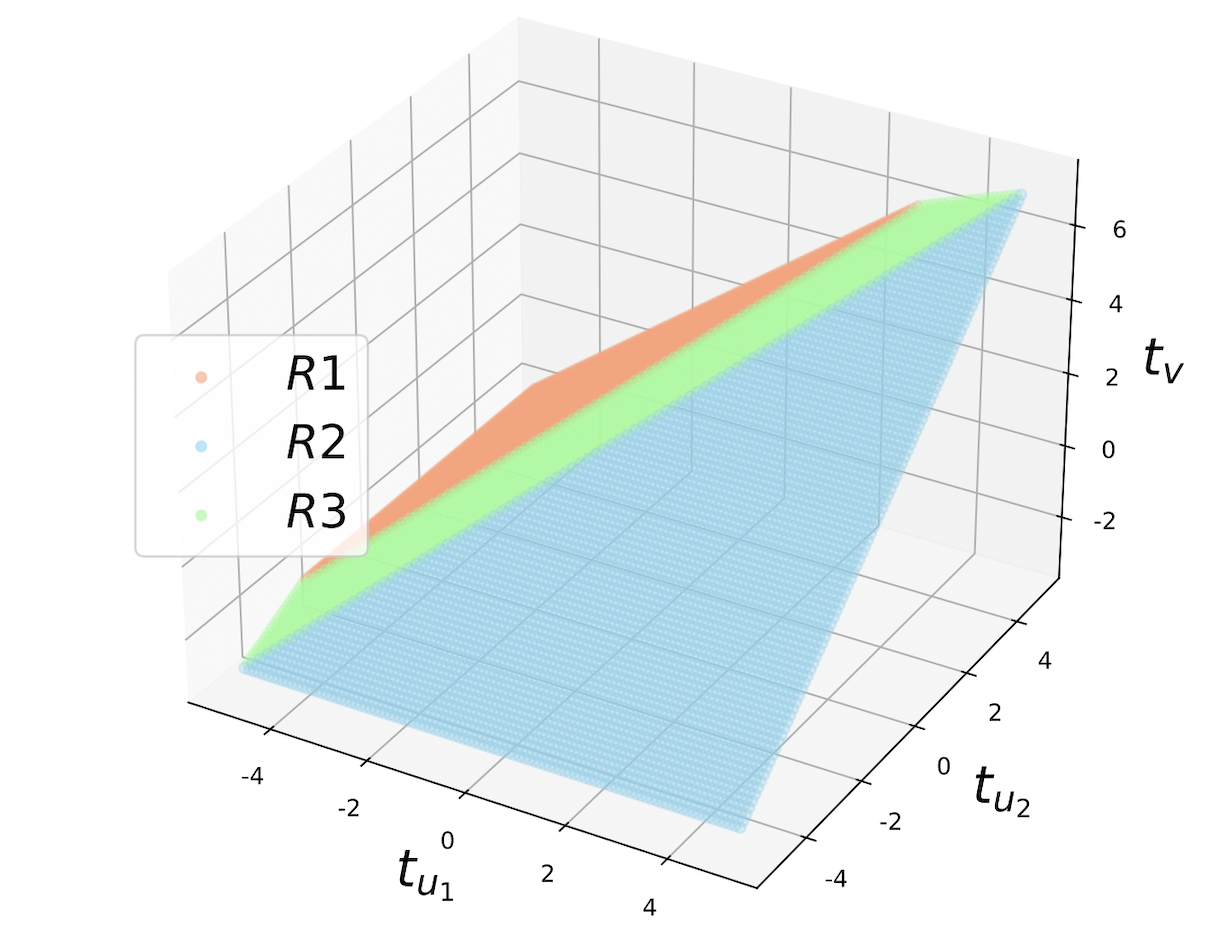}
        \caption{}
        \label{subfig:CPWL_regions_realization}
    \end{subfigure}
	\caption{Illustration of Example \ref{example:linear_regions}. It shows that the output firing time $t_v(t_{u_1}, t_{u_2})$ as a function of inputs $t_{u_1}, t_{u_2}$ is a CPWL mapping. (a) An illustration of the partitioning of the input space into three different regions. (b) Each region is associated with an affine-linear mapping.}
    \label{fig:linRegEx}
\end{figure}  

Already this simple setting with two-dimensional inputs provides crucial insights. The actual number of linear regions in the input domain corresponds to the parameter of the LSNN neuron $v$. In particular, the maximum number of linear regions, i.e. three, can only occur if both weights $w_{u_i v}$ are positive. Similarly, $v$ does not fire at all if both weights are non-positive, which motivates the condition in Assumption II. The exact number of linear regions depends on the sign and magnitude of the weights. Furthermore, note that the linear regions are described by hyperplanes of the form
\begin{equation}\label{eq:App_Boundary2}
    t_{u_2} \lesseqgtr t_{u_1} + C_{p,u},
\end{equation}
where $C_{p,u}$ is a constant depending on the parameter $p$ corresponding to $v$, i.e., threshold, delays and weights, and the actual input neuron(s) causing $v$ to spike. Hence, $p$ has only a limited effect on the boundary of a linear region; depending on their exact value, the parameter only introduces an additive constant shift. 
\begin{remark}\label{rmk:delta}
    Dropping the assumption that $\delta$ is arbitrarily large in \eqref{eq:Response_cond} yields an evolved model that is also biologically more realistic. The magnitude of $\delta$ describes the duration in which an incoming spike influences the membrane potential of a neuron. By setting $\delta$ arbitrarily large, we generally consider an incoming spike to have a lasting effect on the membrane potential. Specifying a fixed $\delta$ increases the importance of the timing of the individual spikes as well as the choice of the parameter. For instance, inputs from certain regions in the input domain may not trigger a spike since the combined effect of multiple delayed incoming spikes is neglected. An in-depth analysis of the influence of $\delta$ is left as future work and we continue our analysis under the assumption that $\delta$ is arbitrarily large.
\end{remark} 

\subsubsection{Spiking neuron with arbitrarily many inputs}

A significant observation in the two-dimensional case is that the firing time $t_v(t_{u_1},t_{u_2})$ as a function of the input $t_{u_1}, t_{u_2}$ is a CPWL mapping. Indeed, each linear region is associated with an affine linear mapping and crucially these affine mappings agree at the breakpoints. This intuitively makes sense since a breakpoint marks the event when the effect of an additional neuron on the firing time of $v$ needs to be taken into consideration or, equivalently, a neuron does not contribute to the firing of $v$ anymore. However, in both circumstances, the effective contribution of this specific neuron is zero (and the contribution of the other neuron remains unchanged) at the breakpoint so that the crossing of a breakpoint and the associated change of a linear region does not result in a discontinuity. 

Formally, the class of CPWL functions describes functions that are globally continuous and locally linear on each polytope in a given finite decomposition of $\R^d$ into polytopes. We refer to the polytopes as linear regions. The insights obtained in the two-dimensional case do not straightforwardly extend to a $d$-dimensional input domain for $d>2$. Crucially, continuity may be lost as the following simple example shows.
\begin{example}\label{example:DPWL}
    Let v be an LSNN neuron with threshold $\theta_v = 1$ and presynaptic neurons $u_1, u_2, u_3$ with corresponding weights $w_{u_1 v} = 1, w_{u_2 v} = -1, w_{u_3 v} = 1$, delays $d_{u_1 v} = d_{u_2 v} =  d_{u_3 v} = 0$, and firing times $t_{u_1}=0, t_{u_2}= 1 + \varepsilon, t_{u_3}=2$, respectively. One easily verifies via \eqref{eqn:firing_time} that 
    \begin{equation*}
        t_v(t_{u_1},t_{u_2},t_{u_3}) = 
            \begin{cases} 
                1,  &\text{ for } \varepsilon \geq 0\\
                2 - \varepsilon,  &\text{ for } \varepsilon < 0           
            \end{cases} .
    \end{equation*}
    Hence, $t_v(t_{u_1},t_{u_2},t_{u_3})$ is not continuous since
    \begin{equation*}
        \lim_{\varepsilon \uparrow 0} t_v(t_{u_1},t_{u_2},t_{u_3}) = \lim_{\varepsilon \uparrow 0} 2 - \varepsilon = 2 \neq 1 = \lim_{\varepsilon \downarrow 0} t_v(t_{u_1},t_{u_2},t_{u_3}).
    \end{equation*}
\end{example}
However, we can show that an LSNN neuron generates a PWL mapping, which under certain conditions on its weights is in fact continuous.
\begin{lemma}\label{lemma:pwlNeuron}
    Let $v$ be a LSNN neuron with with threshold $\theta_v >0$ and presynaptic neurons $u_1, \dots, u_d$ with corresponding weights $w_{u_i v} \in \R$, delays $d_{u_i v}\geq 0$, and firing times $t_{u_i} \in \R$, respectively. Then the firing time $t_v(t_{u_1},\dots,t_{u_d})$ as a function of the firing times $t_{u_1},\dots,t_{u_d}$ is a PWL mapping, and additionally continuous provided that 
    \begin{equation}\label{eq:contCondv}
        w_{u_i v} + \sum_{j : w_{u_j v} \leq 0} w_{u_j v} >0 \quad \text{ for all } i \text{ with } w_{u_i v} > 0.
\end{equation}
\end{lemma}
\begin{proof}
    Recall that we operate under Assumption II, i.e., we presuppose that $\sum_{i=1}^d w_{u_i v} >0$ so that any input firing time $(t_{u_1},\dots,t_{u_d}) \in \R^d$ necessarily triggers a firing in $v$. In particular, the notion of $t_v$ as a PWL mapping on $\R^d$ is well-defined. Moreover, for given $t_{u_1},\dots,t_{u_d}$ we can identify a subset $I\subset \{1,\dots,d\}$ such that all $u_i$ with $i\in I$ contribute to the firing of $v$ whereas spikes from $u_j$ with $j \in I^c= \{1,\dots,d\}\setminus I$ do not influence the firing of $v$ (since these spikes arrive after $v$ already fired). Then $\sum_{i\in I} w_{u_i v}$ is required to be positive, so that by \eqref{eqn:firing_time} and \eqref{eqn:firing_time_specific} the following holds:
    \begin{equation}\label{eq:App_FT}
        t_{u_k} + d_{u_k v} \geq t_v = \frac{\theta_v}{\sum_{i\in I} w_{u_i v}} + \sum_{i\in I} \frac{w_{u_i v}}{\sum_{j\in I} w_{u_j v}} (t_{u_i} + d_{u_i v}) \quad \text{ for all } k \in I^c
    \end{equation}
    and 
    \begin{equation}\label{eq_App_FT2}
        t_{u_k} + d_{u_k v} < t_v = \frac{\theta_v}{\sum_{i\in I} w_{u_i v}} + \sum_{i\in I} \frac{w_{u_i v}}{\sum_{j\in I} w_{u_j v}} (t_{u_i} + d_{u_i v}) \quad \text{ for all } k \in I.    
    \end{equation}
    Rewriting yields
    \begin{equation}\label{eq:App_LinReg}
        t_{u_k} \geq \frac{\theta_v}{\sum_{i\in I} w_{u_i v}} + \sum_{i\in I} \frac{w_{u_i v}}{\sum_{j\in I} w_{u_j v}} (t_{u_i} + d_{u_i v}) - d_{u_k v}  \quad \text{ for all } k \in I^c
    \end{equation}
    and 
    \begin{equation*}
        t_{u_k}  
        \begin{cases}
                    < \frac{\theta_v}{\sum_{j\in I\setminus k} w_{u_j v}} + \sum_{i\in I\setminus k} \frac{w_{u_i v}}{\sum_{j\in I\setminus k} w_{u_j v}} (t_{u_i} + d_{u_i v}) - d_{u_k v},& \text{ if } \frac{\sum_{i\in I\setminus k} w_{u_i v}}{\sum_{i\in I} w_{u_i v}} > 0\\[8pt]
                    > \frac{\theta_v}{\sum_{j\in I\setminus k} w_{u_j v}} + \sum_{i\in I\setminus k} \frac{w_{u_i v}}{\sum_{j\in I\setminus k} w_{u_j v}} (t_{u_i} + d_{u_i v}) - d_{u_k v},& \text{ if } \frac{\sum_{i\in I\setminus k} w_{u_i v}}{\sum_{i\in I} w_{u_i v}} < 0 
                \end{cases} \forall k \in I.
    \end{equation*}
    It is now clear that the firing time $t_v(t_{u_1},\dots,t_{u_d})$ as a function of the input $t_{u_1},\dots, t_{u_d}$ is a piecewise linear mapping on polytopes decomposing $\R^d$. To show that the mapping is additionally continuous if \eqref{eq:contCondv} holds, we need to assess $t_v$ on the breakpoints. Let $I, J \subset \{1,\dots,d\}$ be index sets corresponding to input neurons $\{u_i : i\in I\}$,$\{u_j : j\in J\}$ that cause $v$ to fire on the input region $R^I \subset \R^d$, $R^J \subset \R^d$ respectively. Assume that it is possible to transition from $R^I$ to $R^J$ through a breakpoint $t^{I,J} = (t_{u_1}^{I,J}, \dots, t_{u_d}^{I,J}) \in \R^d$ without leaving $R^I \cup R^J$. Crossing the breakpoint is equivalent to the fact that the input neurons $\{u_i : i \in I\setminus J\}$ do not contribute to the firing of $v$ anymore and the input neurons $\{u_i : i \in J\setminus I\}$ begin to contribute to the firing of $v$. Subsequently, we consider different cases concerning the relation of $I$ and $J$. Thereby, we first require that all weights are positive.
   
    Now, assume that $J \subset I$. Then, we observe that the breakpoint $t^{I,J}$ is necessarily an element of the linear region corresponding to the index set with smaller cardinality, i.e., $t^{I,J} \in R^J$. This is an immediate consequence of \eqref{eq_App_FT2} and the fact that $t^{I,J}$ is characterized by
    \begin{equation}\label{eq:App_bp}
        t^{I,J}_{u_k} + d_{u_k v} = t_v(t^{I,J})  \quad \text{ for all } k \in I\setminus J.
    \end{equation}
    Indeed, if $t^{I,J}_{u_k} + d_{u_k v} > t_v(t^{I,J})$, then there exists $\varepsilon_k>0$ such that \eqref{eq:App_LinReg} also holds for $t^{I,J}_{u_k} \pm \varepsilon$, where $0 \leq \varepsilon < \varepsilon_k$, i.e., a small change in $t^{I,J}_{u_k}$ is not sufficient to change the corresponding linear region, contradicting our assumption that $t^{I,J}$ is a breakpoint. 

    The firing time $t_v(t^{I,J})$ is explicitly given by
    \begin{equation*}
        t_v(t^{I,J}) = \frac{\theta_v}{\sum_{i\in J} w_{u_i v}} + \sum_{i\in J} \frac{w_{u_i v}}{\sum_{j\in J} w_{u_j v}} (t_{u_i}^{I,J} + d_{u_i v})
    \end{equation*}
    Using \eqref{eq:App_bp}, we obtain
    \begin{equation*}
        0 =  -\frac{w_{u_k v}}{\sum_{j\in J} w_{u_j v}}(t_v(t^{I,J}) - (t^{I,J}_{u_k} + d_{u_k v})) \quad \text{ for all } k \in I\setminus J
    \end{equation*}
    so that
    \begin{equation*}
        t_v(t^{I,J}) = \frac{\theta_v}{\sum_{i\in J} w_{u_i v}} + \sum_{i\in J} \frac{w_{u_i v}}{\sum_{j\in J} w_{u_j v}} (t^{I,J}_{u_i} + d_{u_i v})  -\sum_{i\in I\setminus J}\frac{w_{u_i v}}{\sum_{j\in J} w_{u_j v}}(t_v(t^{I,J}) - (t^{I,J}_{u_i} + d_{u_i v})) . 
    \end{equation*}
    Solving for $t_v(t^{I,J})$ yields
    \begin{align*}
        t_v(t^{I,J}) &= \Big(1 + \sum_{i\in I\setminus J}\frac{w_{u_i v}}{\sum_{j\in J} w_{u_j v}}\Big)^{-1}\cdot\Big(\frac{\theta_v}{\sum_{i\in J} w_{u_i v}} + \sum_{i\in I} \frac{w_{u_i v}}{\sum_{j\in J} w_{u_j v}} (t^{I,J}_{u_i} + d_{u_i v}) \Big)  \\ 
        &= \sum_{i\in J}\frac{w_{u_i v}}{\sum_{j\in I} w_{u_j v}} \cdot \Big(\frac{\theta_v}{\sum_{i\in J} w_{u_i v}} + \sum_{i\in I} \frac{w_{u_i v}}{\sum_{j\in J} w_{u_j v}} (t^{I,J}_{u_i} + d_{u_i v}) \Big)\\
        &= \frac{\theta_v}{\sum_{i\in I} w_{u_i v}} + \sum_{i\in I} \frac{w_{u_i v}}{\sum_{j\in I} w_{u_j v}} (t^{I,J}_{u_i} + d_{u_i v}),
    \end{align*}
    which is exactly the expression for the firing time on $R^I$. This shows that $t_v(t_{u_1},\dots,t_{u_d})$ is continuous in $t^{I,J}$. Since the breakpoint $t^{I,J}$ was chosen arbitrarily, $t_v(t_{u_1},\dots,t_{u_d})$ is continuous at any breakpoint.

    The case $I \subset J$ follows analogously. It remains to check the case when neither $I \subset J$ nor $J \subset I$. To that end, let $i^\ast \in I\setminus J$ and $j^\ast \in J\setminus I$. Assume without loss of generality that $t^{I,J} \in R^I$ so that \eqref{eq:App_FT} and \eqref{eq_App_FT2} imply
    \begin{equation*}
        t^{I,J}_{u_{i^\ast}} + d_{u_{i^\ast} v} < t_v(t^{I,J}) \leq t^{I,J}_{u_{j^\ast}} + d_{u_{j^\ast} v}.    
    \end{equation*}
    Hence, there exists $\varepsilon>0$ such that 
    \begin{equation}\label{eq:App_epsBall}
        t^{I,J}_{u_{i^\ast}} + d_{u_{i^\ast} v} < t^{I,J}_{u_{j^\ast}} + d_{u_{j^\ast} v} - \varepsilon.
    \end{equation}
    Moreover, due to the fact that $t^{I,J}$ is a breakpoint we can find $t^J \in R^J \cap \mathcal{B}(t^{I,J}; \frac{\varepsilon}{3})$, where $\mathcal{B}(t^{I,J}; \frac{\varepsilon}{3})$ denotes the open ball with radius $\frac{\varepsilon}{3}$ centered at $t^{I,J}$. In particular, this entails that
    \begin{equation*}
        -\frac{\varepsilon}{3} < (t^{J}_{u_{i^\ast}} - t^{I,J}_{u_{i^\ast}}), (t^{I,J}_{u_{j^\ast}} - t^{J}_{u_{j^\ast}}) < \frac{\varepsilon}{3},
    \end{equation*}
    and therefore together with \eqref{eq:App_epsBall}
    \begin{align*}
        t^{J}_{u_{i^\ast}} + d_{u_{i^\ast} v} - (t^{J}_{u_{j^\ast}} + d_{u_{j^\ast} v}) &=  (t^{J}_{u_{i^\ast}} - t^{I,J}_{u_{i^\ast}}) + (t^{I,J}_{u_{i^\ast}} + d_{u_{i^\ast} v}  - (t^{I,J}_{u_{j^\ast}} + d_{u_{j^\ast} v})) + (t^{I,J}_{u_{j^\ast}} - t^{J}_{u_{j^\ast}})   \\
        &< 0, \quad \text{i.e., } t^{J}_{u_{i^\ast}} + d_{u_{i^\ast} v} < t^{J}_{u_{j^\ast}} + d_{u_{j^\ast} v}.
    \end{align*}
    However, \eqref{eq:App_FT} and \eqref{eq_App_FT2} require that
    \begin{equation*}
        t^{J}_{u_{j^\ast}} + d_{u_{j^\ast} v} < t_v(t^J) \leq t^{J}_{u_{i^\ast}} + d_{u_{i^\ast} v}     
    \end{equation*}
    since $t^J \in R^J$. Thus, $t^{I,J}$ can not exist, and the case when neither $I \subset J$ nor $J \subset I$ can not arise.

    Finally, we observe that the previous analysis remains valid when dropping the positivity assumption on the weights, with the only exception being \eqref{eq:App_bp}. In particular, for negative weights the behaviour demonstrated in Example \ref{example:DPWL} may arise so that \eqref{eq:App_bp} is not valid in general anymore. However, by restricting the feasible weights, i.e., prohibiting negative weights of large magnitude via the condition in \eqref{eq:contCondv}, we ensure that \eqref{eq:App_bp} holds and the statement follows.
\end{proof}
\begin{remark}
    The observations about the parameter $\delta$ in Remark \ref{rmk:delta} directly transfer from the two- to the $d$-dimensional setting. Additionally, note that \eqref{eq:contCondv} is a necessary and sufficient condition for the firing time of an LSNN neuron to be continuous in the input firing times. However, the corresponding condition for a whole multi-layer LSNN as provided in \eqref{eq:SNN_CPWL} is sufficient but not necessary.
\end{remark}
We want to highlight some similarities and differences between two- and $d$-dimensional inputs. In both cases, the actual number of linear regions depends on the choice of parameter, in particular, the synaptic weights. Thereby, the number of linear regions scales at most as $2^d-1$ in the input dimension $d$ of an LSNN neuron, and the number is indeed attained if all weights are positive (as observed in Lemma \ref{prop:Partition}). However, the $d$-dimensional case allows for more flexibility in the structure of the linear regions. Recall that in the two-dimensional case, the boundary of any linear region is described by hyperplanes of the form \eqref{eq:App_Boundary2}. This does not hold if $d>2$, see e.g. \eqref{eq:App_LinReg}. Here, the weights also affect the shape of the linear region. Refining the connection between the boundaries of a linear region, its response function, and the specific choice of parameter requires further considerations. Similarly, obtaining a non-trivial upper bound on the number of linear regions for networks of LSNN neurons is not straightforward as the following example shows. 

\begin{example}
  Let $\Phi$ be a one-layer LSNN with $d_{\text{in}}$ input neurons and $d_{\text{out}}$ output neurons.
  Via Proposition \ref{prop:Partition}, we certainly can upper bound the number of linear regions generated by $\Phi$ by $(2^{d_{\text{in}}} - 1)^{d_{\text{out}}}$, i.e., the product of the number of linear regions generated by each output neuron. Unfortunately, the bound is far from optimal. Consider the case when $d_{\text{in}} = d_{\text{out}} = 2$. Then, the structure of the linear regions generated by the individual output neurons is given in \eqref{eq:App_Boundary2}. In particular, the boundary of the linear regions is described by a set of specific hyperplanes with a common normal vector, where the parameters of $\Phi$ only induce a shift of the hyperplanes. In other words, the hyperplanes separating the linear regions are parallel. Hence, each output neuron generates at most two parallel hyperplanes yielding three linear regions (see Figure \ref{fig:linRegEx}). The number of linear regions generated by $\Phi$ is therefore given by the number of regions four parallel hyperplanes can decompose the input domain into, i.e., at most $5 < 9 = (2^{d_{\text{in}}} - 1)^{d_{\text{out}}}$.      
\end{example}

Finally, under even stronger conditions, i.e., all weights are positive, we can further characterize the firing map of an LSNN neuron.
\begin{lemma}\label{lemma:Neuronproperties}
    Let $v$ be a LSNN neuron with with threshold $\theta_v >0$ and presynaptic neurons $u_1, \dots, u_d$ with corresponding weights $w_{u_i v} >0$, delays $d_{u_i v}\geq 0$, and firing times $t_{u_i} \in \R$, respectively. Then the firing time $t_v(t_{u_1},\dots,t_{u_d})$ as a function of the firing times $t_{u_1},\dots,t_{u_d}$ is a increasing and concave function. Moreover, the firing time of $v$ is given by
    \begin{equation}\label{eq:AlternativeFT}
        t_v(t_{u_1},\dots,t_{u_d}) = \inf_{\emptyset \neq I \subset [d]} \Big\{ s^I = \frac{1}{\sum_{i \in I} w_{u_i v}}\big(\theta_v + \sum_{i \in I}w_{u_i v} (t_{u_i} + d_{u_i v})\big) : s^I > \max_{i\in I} t_{u_i}   \Big\}.
    \end{equation}
\end{lemma}
\begin{proof}
    First, we show that the expression 
    \begin{equation*}
        s^\ast(t_{u_1},\dots,t_{u_d}) = \inf_{\emptyset \neq I \subset [d]} \Big\{ s^I = \frac{1}{\sum_{i \in I} w_{u_i v}}\big(\theta_v + \sum_{i \in I}w_{u_i v} (t_{u_i} + d_{u_i v})\big) : s^I > \max_{i\in I} t_{u_i}   \Big\}
    \end{equation*}
    is equivalent to the firing time $t_v(t_{u_1},\dots,t_{u_d})$ of an LSNN neuron $v$. By \eqref{eqn:firing_time_specific} we immediately observe that 
    \begin{equation*}
        t_v(t_{u_1},\dots,t_{u_d}) =  \frac{1}{\sum_{i \in F} w_{u_i v}}\big(\theta_v + \sum_{i \in F}w_{u_i v} (t_{u_i} + d_{u_i v})\big)   
    \end{equation*} 
    for some $F\subset [d]$ such that 
    \begin{equation*}
        \max_{i\in F} t_{u_i} < t_v(t_{u_1},\dots,t_{u_d}) \leq \min_{i\in [d]\setminus F} t_{u_i},    
    \end{equation*}
    i.e., $t_v(t_{u_1},\dots,t_{u_d}) \geq s^\ast(t_{u_1},\dots,t_{u_d})$. Moreover, set 
    \begin{equation*}
        s^J =  \frac{1}{\sum_{i \in J} w_{u_i v}}\big(\theta_v + \sum_{i \in J}w_{u_i v} (t_{u_i} + d_{u_i v})\big)  \quad \text{ for } J \subset [d] 
    \end{equation*}     
    and assume that $s^J > t_{u_j}$ for all $j\in J$. If $J\cap F^C \neq \emptyset$, then for some $k \in J\cap F^C$ we have  $s^J > t_{u_k}\geq t_v(t_{u_1},\dots,t_{u_d})$. Hence, assume that $J\cap F^C = \emptyset$, i.e., $J \subset F$. Then, either $s^J = t_v(t_{u_1},\dots,t_{u_d})$ (if $F=J$) or (for $F\neq J$) we get
    \begin{align*}
        &\sum_{i \in J}w_{u_i v} (s^J - t_{u_i} - d_{u_i v}) = \theta = \sum_{i \in F}w_{u_i v} (t_v(t_{u_1},\dots,t_{u_d}) - t_{u_i} - d_{u_i v}) \\
        &= \sum_{i \in J}w_{u_i v} (t_v(t_{u_1},\dots,t_{u_d}) - t_{u_i} - d_{u_i v}) + \sum_{i \in F\setminus J}w_{u_i v} (t_v(t_{u_1},\dots,t_{u_d}) - t_{u_i} - d_{u_i v})\\
        &> \sum_{i \in J}w_{u_i v} (t_v(t_{u_1},\dots,t_{u_d}) - t_{u_i} - d_{u_i v}), \quad \text{ i.e., } s^J > t_v(t_{u_1},\dots,t_{u_d}).
    \end{align*}
    Therefore $s^J \geq  t_v(t_{u_1},\dots,t_{u_d})$ so that $t_v(t_{u_1},\dots,t_{u_d}) \leq s^\ast(t_{u_1},\dots,t_{u_d})$ and \eqref{eq:AlternativeFT} follows.

    Next, we show that $t_v$ is an increasing function. Let $\varepsilon \in \R^d$ such that $\varepsilon_i \geq 0$ for all $i=1,\dots,d$. Denote the potential of $v$ for inputs $(t_{u_1},\dots,t_{u_d})$ and $(t_{u_1},\dots,t_{u_d}) + \varepsilon$ by $P_v$ and $P_v^\varepsilon$, respectively. Due to \eqref{eqn:firing_time}, we know that
    \begin{align*}
        \theta_v > P_v(t) &= \sum_{i \in [d]} w_{u_i v} \sigma(t - t_{u_i} - d_{u_i v}) \geq \sum_{i \in [d]} w_{u_i v} \sigma(t - t_{u_i} - \varepsilon_i - d_{u_i v}) \\
        &= P^\varepsilon_v(t) \quad \text{ for any } t < t_v(t_{u_1},\dots,t_{u_d}).
    \end{align*}
    Hence, the potential $P^\varepsilon_v(t)$ does not reach $\theta_v$ for $t <t_v(t_{u_1},\dots,t_{u_d})$. In other words, $t_v(t_{u_1}+\varepsilon_i,\dots,t_{u_d}+\varepsilon_d) \geq t_v(t_{u_1},\dots,t_{u_d})$ so that $t_v$ is indeed an increasing function.

    Finally, we prove the concavity of $t_v$. Let $x,y \in \R^d$ denote two distinct firing times of $u_1, \dots, u_d$. Our goal is to show that
    \begin{equation}\label{eq:ConcaveFct}
        t_v(p x + (1-p) y) \geq p t_v(x) + (1-p) t_v(y) \quad \text{ for all } 0 < p < 1.
    \end{equation}
    We already know that $t_v$ is a CPWL function, i.e., $t_v$ partitions $\R^d$ into linear regions. Therefore, we consider the following possibilities: Either $x$ and $y$ are in the same or different linear regions. Since the linear regions are defined by intersections of halfspaces (see \eqref{eq:App_FT} and \eqref{eq_App_FT2}), they are convex so that in the former case it is immediate to verify that \eqref{eq:ConcaveFct} holds. In the latter case, we need to further distinguish between two cases. Assume $x$ and $y$ are located in distinct linear regions $R^I$ and $R^J$, respectively, whereby $I,J \subset [d]$ indicate the subsets of neurons $u_1, \dots, u_d$ that trigger the firing of v on the given linear region. Then, either $p x + (1-p) y \in R^I \cup R^J$ or $p x + (1-p) y \in (R^I \cup R^J)^C$ for $0 < p < 1$. We will explicitly show the concavity of $t_v$ for $p x + (1-p) y \in R^I \cup R^J$, the other case follows along similar lines. Without loss of generality, we assume that $p x + (1-p) y \in R^I$. Moreover, we denote the restriction of $t_v$ to $R^I$ and $R^J$ by $A^I$ and $A^J$, respectively. Since $t_v$ is a CPWL function, $A^I$ and $A^J$ are affine functions so that $A^I(z^I) = (m^I)^T z^I + b^I$ and $A^J(z^J) = (m^J)^T z^J + b^J$ for $z^I \in R^I$, $z^J \in R^J$ and suitable parameter $m^I, m^J \in \R^d$, $b^I,b^J \in \R$. Using that $A^I$ and $A^J$ are affine, one derives that
    \begin{equation*}
        A^I(p x + (1-p) y) \geq p A^I(x) + (1-p) A^J(y) 
    \end{equation*}
    is equivalent to
    \begin{equation}\label{eq:concaveCond}
        (m^I - m^J)^T y + (b^I - b^J) \geq 0, 
    \end{equation}
    i.e., verifying \eqref{eq:concaveCond} suffices to obtain concavity of $t_v$. To that end, observe that $m^I, b^I$ and $m^J, b^J$ are determined by $I$ and $J$, respectively, i.e., by the weights and delays associated with $I$ and $J$ via the firing time of $v$ in \eqref{eqn:firing_time_specific}. In particular, we have
    \begin{equation*}
        m^I_i = 
        \begin{cases}
            0, &\text{ if } i \notin I \\
            \frac{w_{u_i v} }{\sum_{j\in I} w_{u_j v}}, &\text{ if } i \in I
        \end{cases}
        \quad 
        \text{ and }
        \quad
        b^I = \frac{\theta_v + \sum_{i\in I} w_{u_i v} d_{u_i v }}{\sum_{j\in I} w_{u_j v} }  ,      
    \end{equation*}
    and the analogous expressions for $m^J, b^J$. Using these expressions and the properties of the firing time $t_v$ one indeed obtains \eqref{eq:concaveCond}.
\end{proof}

\subsubsection{Networks of spiking neurons}

Next, we want to extend the properties of an LSNN neuron to a network of LSNN neurons. Due to the layer-wise arrangement of neurons in an LSNN, this task is rather straightforward. Thus, the results in Theorem \ref{thm:CPWL_mapping} and Proposition \ref{prop:SNNproperties} follow.
\begin{proof}[\textbf{of Theorem \ref{thm:CPWL_mapping}}] 
    In Lemma \ref{lemma:pwlNeuron}, we showed that the firing time of an LSNN neuron with arbitrarily many input neurons is a PWL function in the input firing times. Since $\Phi$ consists of LSNN neurons arranged in layers it immediately follows that the firing map of each layer is PWL. Thus, as a composition of PWL mappings  $t_\Phi$ itself is PWL. Furthermore, \eqref{eq:SNN_CPWL} guarantees that \eqref{eq:contCondv} holds for each neuron in $\Phi$, i.e., the firing time of each LSNN neuron continuously depends on its input firing times. Therefore, we conclude analogously as for the PWL property that $t_\Phi$ is continuous, i.e., $t_\Phi$ is a CPWL mapping, if \eqref{eq:SNN_CPWL} is satisfied.    
\end{proof}
As already indicated, the condition in Theorem \ref{thm:CPWL_mapping} is not necessary to obtain a continuous firing map. Next, we specifically construct an LSNN demonstrating this observation.
\begin{example}
    Consider a two-layer LSNN $\Phi$ defined by
    \begin{equation*}\label{eq:SNN_counter_example_negative_CPWL}
        W^1 = \begin{pmatrix}
            1 & 0 & 0 \\
            0 & 1 & \frac{1}{2} 
        \end{pmatrix}, 
        W^2 = \begin{pmatrix}
            1\\
            1\\
            -1
        \end{pmatrix},
        \Theta^1 = \begin{pmatrix}
            1 \\
            1 \\
            1
        \end{pmatrix}, 
        \Theta^2 = 1.
    \end{equation*}
    Without loss of generality, we assume the delays to be zero. We show that despite the negative weight, which violates \eqref{eq:SNN_CPWL}, $t_{\Phi}(t_1, t_2)$ is a CPWL function for the input firing times $t_{u_1}, t_{u_2} \in \R$. Denote the input neurons by $u_1, u_2$ and the neurons in the hidden layer by $v_1, v_2, v_3$. Note that the firing times of the neurons in the hidden layer depend on either $u_1$ or $u_2$. Via \eqref{eqn:firing_time_specific}, the firing times of the neurons in the hidden layer are 
    \begin{equation*}
        t_{v_1} = 1+t_{u_1}, \quad t_{v_2} = 1+t_{u_2}, \quad t_{v_3} = 2+t_{u_2}. 
    \end{equation*}
Similarly, via \eqref{eqn:firing_time_specific}, the firing time of the output neuron is
    \begin{equation*}
        t_\Phi(t_{u_1}, t_{u_2}) = \begin{cases}
            2 + t_{u_1},  &\text{ if } t_{v_1} < t_{v_2} \\
            2 + t_{u_2},  &\text{ if } t_{v_2} < t_{v_1} \\ 
            \frac{3}{2} + \frac{1}{2}(t_{u_1}+t_{u_2}),  &\text{ if } \abs{t_{v_1} - t_{v_2}} \leq 1
        \end{cases}.
    \end{equation*}
Hence, the breakpoints $(t^b_{u_1} - t^b_{u_2})\in \R^2$ of the linear regions are determined by 
\begin{equation*}
    \abs{t^b_{u_1} - t^b_{u_2}} = 1 \Leftrightarrow t^b_{u_1} = 
    \begin{cases}
            t^b_{u_2} + 1, &\text{ if } t^b_{u_1} > t^b_{u_2} \\
            t^b_{u_2} - 1, &\text{ if } t^b_{u_1} < t^b_{u_2} 
        \end{cases}.        
\end{equation*}
Evaluating the firing time of the output neuron at the breakpoints gives
\begin{equation*}
        t_\Phi(t^b_{u_1}, t^b_{u_2}) = \begin{cases}
            2 + t^b_{u_2},  &\text{ if } t^b_{u_1} > t^b_{u_2} \\
            2 + t^b_{u_1},  &\text{ if } t^b_{u_1} < t^b_{u_2}
        \end{cases},
\end{equation*}
which shows that $t_\Phi$ is indeed continuous.
\end{example}

With the same arguments as in the proof of Theorem \ref{thm:CPWL_mapping}, we also show Proposition \ref{prop:SNNproperties}. 
\begin{proof}[\textbf{of Proposition \ref{prop:SNNproperties}}] 
    First, the alternative expression for the firing time of an LSNN neuron with only positive weights is proven in Lemma \ref{lemma:Neuronproperties}. Hence, it is left to show that $t_\Phi$ is increasing and concave under the given condition. However, this is a direct consequence of Lemma \ref{lemma:Neuronproperties} and the structure of $\Phi$. In particular, one establishes that $t_\Phi$ is increasing as a composition of strictly increasing maps and concludes that $t_\Phi$ is also concave since the composition of non-decreasing and concave functions is again concave.
\end{proof}
Finally, we construct specific LSNNs with locally decreasing and non-concave firing maps, highlighting the need for the positive weights condition.
\begin{example}
        Let $v$ be an LSNN neuron with threshold $\theta_v = 1$ and presynaptic neurons $u_1, u_2, u_3$ with corresponding weights $w_{u_1 v} = -\frac{1}{2}, w_{u_2 v} = 1, w_{u_3 v} = 1$, delays $d_{u_1 v} = d_{u_2 v} =  d_{u_3 v} = 0$. Additionally, let $t_a = (0,1,1)^T$ and $t_b = (\frac{1}{2},1,1)^T$ be two data points corresponding to the firing times of input neurons. 
        Via \eqref{eqn:firing_time_specific}, one can directly verify that  $t_v(t_a) > t_v(t_b)$ when $t_a < t_b$, i.e., $t_v$ is not increasing. For the non-concave part, let $t_x = (\frac{1}{2},0,0)^T$ and $t_y = (\frac{1}{2},1,1)^T$ as two data points corresponding to the firing times of input neurons. Setting $p = \frac{1}{2}$ and again via \eqref{eqn:firing_time_specific}, one can directly verify that 
        \begin{equation*}
            pt_v(t_x, t_y) + (1-p)t_v(t_x, t_y) \geq t_v(pt_x + (1-p)t_y).
        \end{equation*}
\end{example}

\subsection{Realizing ReLU with LSNNs}
\label{sec:RealizingReLU_app}

In this section, we show that LSNNs can realize the ReLU function. To that end, we first analyze the ability of an LSNN neuron to express certain (simple) piecewise functions via its firing map.

\begin{proposition}\label{prop:f_1}
    Let $c_1, c_2 \in \R$,  $c_3 \in (a,b) \subset \R$ and consider $f_1,f_2: [a,b] \to \R$ given by 
    \begin{equation*}
        f_1(x) = \begin{cases}
            x + c_1 &, \text{ if } x > c_3 \\
            c_2 &, \text{ if } x < c_3
        \end{cases}
        \quad
        \text{ or }
        \quad
        f_2(x) = \begin{cases}
            x + c_1 &, \text{ if } x < c_3 \\
            c_2 &, \text{ if } x > c_3
        \end{cases},
    \end{equation*}
    where we do not fix the value at the breakpoint $x=c_3$.
    There does not exist an LSNN neuron $v$ with input neuron $u_1$ such that $t_v(x) = f(x)$  on $[a,b]$, where $f \in \{f_1, -f_1, f_2\}$ and $t_v(x)$ denotes the firing time of $v$ on input $t_{u_1} = x$.
\end{proposition}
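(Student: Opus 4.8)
The plan is to argue by contradiction using the explicit firing-time formula \eqref{eqn:firing_time}. Suppose a one-layer SNN with a single input neuron $u_1$ and output neuron $v$ realizes $t_v(x) = f_i(x)$ on $[a,b]$ for some $i \in \{1,2\}$. Since there is only one input neuron, the only non-empty subset of presynaptic neurons is $\{u_1\}$, so whenever $v$ fires we necessarily have $E(\mathbf{t}_U) = \{(u_1,v)\}$. The whole argument then reduces to reading off what such a network can possibly compute.

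First I would specialize \eqref{eq:SRM4} and \eqref{eqn:firing_time} to this single-synapse situation. The threshold condition $\theta_v = w_{u_1 v}(t_v - t_{u_1} - d_{u_1 v})$ forces $w_{u_1 v} > 0$ for $v$ to fire at all (consistent with the positivity-of-weights requirement) and yields, after substituting $t_{u_1} = x$,
\begin{equation*}
    t_v(x) = \frac{\theta_v}{w_{u_1 v}} + x + d_{u_1 v}.
\end{equation*}
The crucial observation is that this is affine in $x$ with slope exactly $1$, since the coefficient of $x$ is $w_{u_1 v}/w_{u_1 v} = 1$, independent of any parameter choice. I would also record the complementary case: if $w_{u_1 v} \le 0$, then the net contribution to $P_v$ is non-positive, so the potential never reaches $\theta_v$, $v$ never fires, and $t_v$ is undefined, whereas $f_i$ is defined on all of $[a,b]$. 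Hence the only admissible case is $w_{u_1 v} > 0$, in which the realized map is affine with unit slope.

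The contradiction then follows by comparing with the prescribed target. Because $c_2 \in (a,b)$ lies strictly in the interior, both branches of $f_i$ occupy non-degenerate subintervals of $[a,b]$: on one side of $c_2$ the function equals $x + c_1$ (slope $1$), and on the other side it equals the constant $c_1$ (slope $0$). Thus $f_i$ is genuinely non-affine on $[a,b]$, whereas every realizable $t_v$ is affine. These are incompatible, which completes the contradiction and hence the proof.

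I expect the only point requiring care — the main obstacle, such as it is — to be the completeness of the case analysis: one must verify that a single-input one-layer SNN cannot, on its own, produce a piecewise behaviour (e.g. firing for some inputs and remaining silent for others), which would otherwise permit a non-affine $t_v$. This is ruled out precisely because, under the standing assumption that $\delta$ is large enough for \eqref{eqn:firing_time} to hold, the sign of $w_{u_1 v}$ alone determines whether $v$ fires, uniformly in $x$; a single synapse provides no mechanism to switch the neuron between firing and silence across the input domain.
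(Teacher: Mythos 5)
There is a genuine gap: your proof only covers the case where $u_1$ is literally the \emph{only} input neuron, but that is not the intended scope of the proposition. In this paper's conventions, ``a one-layer SNN with output neuron $v$ and input neuron $u_1$'' allows additional \emph{auxiliary} input neurons with constant firing times --- compare Proposition \ref{prop:4beforerelu}, which uses the identical phrasing and whose proof explicitly introduces an auxiliary input neuron $u_2$ to realize a non-affine CPWL function with a one-layer SNN. The paper's proof of Proposition \ref{prop:f_1} disposes of your single-input case in three lines (it is exactly your slope-$1$ observation) and then spends all of its effort on the substantive case: showing that no matter how many auxiliary inputs $u_2,\dots,u_n$ with constant firing times one adds, the resulting firing-time map still cannot equal $f_1$ or $f_2$. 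Without that, the proposition would be too weak to support Theorem \ref{thm:relu_approx}, whose claim is that \emph{no} one-layer SNN realizes ReLU.

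Moreover, the claim in your final paragraph --- that a one-layer network ``provides no mechanism to switch the neuron between firing and silence across the input domain,'' so no piecewise behaviour can arise --- is false in the intended setting and is precisely what makes the statement non-trivial. With auxiliary inputs, different subsets of presynaptic neurons can cause the firing of $v$ on different subintervals of $[a,b]$, producing genuinely piecewise-linear outputs; Proposition \ref{prop:4beforerelu} realizes the function $-x+c$ for $x<0$, constant $c$ for $x\geq 0$, in exactly this way. The actual difficulty is to show that the particular orientation of $f_1$ and $f_2$ (which branch is constant, relative to the direction in which spikes arrive earlier or later) is unattainable. The paper does this by ordering the arrival times $t_{u_i}+d_{u_i v}$, arguing that the contributing index set on a constant piece must be an initial segment $\{2,\dots,\ell\}$ excluding $u_1$, and then deriving contradictions: for $f_1$, that $v$ would remain constant for $x>c_2$ as well; for $f_2$, that the candidate index sets force either $\sum_{j\in J} w_{u_j v}=0$, an impossible ordering of the subintervals, or an extra linear region that cannot fit between $(a_2,0)$ and $[0,b_2)$. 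None of this case analysis can be bypassed by the slope argument alone.
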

\begin{proof}
    If $u_1$ is the only input neuron, then $v$ fires if and only if $w_{u_1 v} > 0$ and by \eqref{eqn:firing_time_specific} the firing time is given by
    \begin{equation*}
            t_v(x) = \frac{\theta}{w_{u_1 v}} + x + d_{u_1 v}  \quad \text{ for all } x \in [a,b],   
    \end{equation*}
    i.e., $t_v \neq f$. Therefore, we introduce auxiliary input neurons $u_2, \dots, u_n$ and assume without loss of generality that $t_{u_i} + d_{u_i v} < t_{u_j} + d_{u_j v}$ for $j >i$. Here, the firing times $t_{u_i}$, $i=2, \dots,n$, are suitable constants. We will show that even in this extended setting $t_v \neq f$ still holds and thereby also the claim.

    For the sake of contradiction, assume that $t_v(x) = f_1(x)$ for all $x \in [a,b]$. This implies that there exists an index set $J \subset \{1,\dots,n\}$ with $\sum_{j\in J} w_{u_j v} > 0$ and a corresponding non-empty interval $(a_1,c_3) \subset [a,b]$ such that 
    \begin{equation}\label{eq:const_part}
        c_2 = t_v(x) =  \frac{1}{\sum_{i\in J} w_{u_i v}} \Big(\theta_v + \sum_{i\in J} w_{u_i v} (t_{u_i} + d_{u_i v})\Big)  \quad \text{ for all } x \in (a_1,c_3),
    \end{equation}
    where we have applied \eqref{eqn:firing_time_specific}. Note that $J$ is of the form $J = \{1,\dots, \ell\}$ for some $\ell \in \{2,\dots,n\}$ because $(t_{u_i} + d_{u_i v})_{i=2}^n$ is in ascending order, i.e., if the spike from $u_\ell$ has reached $v$ before $v$ fired, then so did the spikes from $u_i$, $2\leq i <\ell$. In particular, we immediately observe that $1 \notin J$ since otherwise, due to the contribution from $u_1$, $t_v$ is non-constant on $(a_1,c_3)$, i.e., $t_v \neq c_3$ on $(a_1,c_3)$. Hence, the spike from $u_1$ with firing time $x \in (a_1,c_3)$ necessarily reaches $v$ after the subset of neurons specified by $J$ already caused $v$ to fire. Therefore, we have
    \begin{equation*}
        x + d_{u_1 v} \geq t_v(x) = c_2 \quad \text{ for all } x \in (a_1,c_3).
    \end{equation*}
    However, it immediately follows for any $y \geq c_3$ that 
    \begin{equation*}
        y + d_{u_1 v} > x + d_{u_1 v} \geq c_2 \quad \text{ for all } x \in (a_1,c_3). 
    \end{equation*}
    Thus, we know that a spike from $u_1$ with $t_{u_1} > c_3$ does not reach $v$ before it emits a spike unless there are additional incoming spikes in $v$ from neurons $\{u_i: i \in I \subset \{2,\dots,n\}\setminus J= \{\ell+1,\dots,n\}\}$, which delay its firing. 
    By the same reasoning as before, we conclude that $t_{u_{\ell+1}} + d_{u_{\ell+1} v} \geq c_2$ because otherwise \eqref{eq:const_part} would be violated since spikes from neurons $\{u_i: i \in I \}$ contribute to the firing of $v$ on $(a_1,c_3)$. Consequently, the firing of $v$ can not be delayed, i.e., 
    $t_v(x) = c_2$ for all $x\in (a_1,b]$, which contradicts $t_v = f_1$ on $[a,b]$. The same reasoning can be applied to derive that $-f_1$ can not be emulated by the firing map of an LSNN neuron. 
    
    We perform a similar analysis to show that $f_2$ can not be emulated. For the sake of contradiction, assume that $t_v(x) = f_2(x)$ for all $x \in [a,b]$. This implies that there exists an index set $I \subset \{1,\dots,n\}$ with $\sum_{i\in I} w_{u_i v} > 0$ and a corresponding non-empty interval $(a_1,c_3) \subset [a,b]$ such that
    \begin{equation}\label{eq:non-const}
        x + c_1 = t_v(x) =  \frac{1}{\sum_{i\in I} w_{u_i v}} \Big(\theta_v +  w_{u_1 v} (x + d_{u_1 v}) + \sum_{i\in I\setminus\{1\}} w_{u_i v} (t_{u_i} + d_{u_i v})\Big) \quad \text{ for } x \in (a_1,c_3),
    \end{equation}
    where we have applied \eqref{eqn:firing_time_specific}. Note that $1 \in I$ necessarily needs to hold, since otherwise $t_v$ is constant on $(a_1,c_3)$. 
    Hence, $I$ is of the form $I = \{1,\dots, \ell\}$ for some $\ell \in \{1,\dots,n\}$. To satisfy $t_v(x) = f_2(x)$ for all $x \in [a,b]$, there additionally needs to exist an index set $J \subset \{1,\dots,n\}$ with $\sum_{j\in J} w_{u_j v} > 0$ and a corresponding non-empty interval $(c_3,b_1) \subset [a,b]$ such that $t_v = c_2$ on $(c_3,b_1)$. We conclude that $J = \{1,\dots, m\}$ or $J = \{2,\dots,m\}$ for some $m \in \{1,\dots,n\}$. In the former case, $t_v$ is non-constant on $(c_3,b_1)$ (due to the contribution from $u_1$), i.e., $t_v \neq c_2$ on $(c_3,b_1)$. Hence, it remains to consider the latter case. First, note that $c_1 \leq 0$ is not a admissible choice since \eqref{eq:non-const} implies that for $x \in (a_1,c_3)$
    \begin{equation*}
        t_v(x) = x + c_1 \leq x, 
    \end{equation*}
    i.e., the spike from $u_1$ does not reach $v$ before its firing, which contradicts the construction. Thus, there exists some $0< \varepsilon < c_1$ such that $c_3 - \varepsilon \in (a_1,c_3)$ and
    \begin{equation*}
        t_v(c_3 - \varepsilon) = c_3 - \varepsilon + c_1 > c_3.
    \end{equation*}
    This particularly entails that no spikes from neurons $\{u_j : j \in J\setminus I\}$ reach $v$ before time $c_3 - \varepsilon + c_1$, i.e., $t_{u_{\ell+1}} +d_{u_{\ell+1}} \geq c_3 - \varepsilon + c_1 > c_3$. Therefore, we derive that $m \leq \ell$ needs to hold since otherwise $u_1$ with $t_{u_1}= x \in (c_3, c_3 - \varepsilon + c_1)$ contributes to the firing of $v$, contradicting the fact that $t_v$ is constant on $(c_3, b_1)$. Moreover, $m= \ell$, i.e., $J = I\setminus\{1\}$, is not valid because either $J$ is empty or by comparing the coefficients of $x$ in \eqref{eq:non-const} we find that 
    \begin{equation}\label{eq:weight_constraint}
        \frac{w_{u_1 v}}{\sum_{i\in I} w_{u_i v}} = 1 \Leftrightarrow \sum_{i\in I\setminus\{1\}} w_{u_i v} = 0, 
    \end{equation}
    which implies the contradiction
    \begin{equation*}
        0 = \sum_{i\in I\setminus\{1\}} w_{u_i v} = \sum_{j\in J} w_{u_j v} >0.
    \end{equation*}
    Finally, $m< \ell$ is also not feasible. This follows from \eqref{eq:weight_constraint} via the observation that $w_{u_1 v}>0$, i.e., the contribution from $u_1$ to the potential of $v$ is necessarily positive if the associated spike arrives in time. Consequently, when $u_1$ stops contributing to the firing of $v$ on $(c_3,b_1)$ the firing time $t_v$ did not decrease (since by the assumption $m< \ell$ also no further incoming spikes arrive). Hence, the spikes from $u_2, \dots, u_{\ell}$ contribute to the firing of $v$ on any input $x\in (c_3,b_1)$ contradicting $m< \ell$. Thus, $J$ can not exist and thereby $t_v = f_2$ on $[a,b]$ can not be achieved.
\end{proof}
Next, we show that $-f_2$ as defined in Proposition \ref{prop:f_1} can indeed be emulated by the firing map of an LSNN neuron under suitable conditions. 
\begin{proposition}
\label{prop:4beforerelu}
    Let $c_1, c_2 \in \R$,  $c_3 \in (a,b) \subset \R$ and consider $f: [a,b] \to \R$
    \begin{equation*}
        f(x) = \begin{cases}
            - x + c_1 &, \text{ if } x < c_3 \\
            c_2 &, \text{ if } x > c_3
        \end{cases},
    \end{equation*}
    where we do not fix the value at the breakpoint $x=c_3$. 
    There exists a one-layer LSNN $\Phi$ with output neuron $v$ and input neuron $u_1$ such that $t_v(x) = f(x)$ on $[a,b]$, where $t_v(x)$ denotes the firing time of $v$ on input $t_{u_1} = x$, if and only if $c_1 - c_3 = c_2$ as well as $c_1 \geq 2 c_3$.    
\end{proposition}
\begin{proof}
    First, we show that $t_v$ can not emulate $f$ if the conditions $c_1 - c_3 = c_2$ and $c_1 \geq 2 c_3$ are not met. The argument is essentially the same as in the proof of Proposition \ref{prop:f_1} and we only sketch the main steps. Assuming that $t_v(x) = f(x)$ for all $x \in [a,b]$ implies that there exists an index set $I = \{1\} \cup \{2,\dots, \ell\}$ for some $\ell \in \{2,\dots,n\}$ with $\sum_{i\in I} w_{u_i v} > 0$ and a corresponding non-empty interval $(a_1,c_3) \subset [a,b]$ such that
    \begin{equation}\label{eq:-f_2}
        - x + c_1 = t_v(x) =  \frac{1}{\sum_{i\in I} w_{u_i v}} \Big(\theta_v +  w_{u_1 v} (x + d_{u_1 v}) + \sum_{i\in I\setminus\{1\}} w_{u_i v} (t_{u_i} + d_{u_i v})\Big) \quad \text{ for } x \in (a_1,c_3),
    \end{equation}
    where we have applied \eqref{eqn:firing_time_specific}. Due to $1 \in I$, we obtain 
    \begin{equation}\label{eq:-f_2const}
        - x + c_1 = t_v(x) > x + d_{u_1 v} \Leftrightarrow c_1 > 2x + d_{u_1 v}  \quad \text{ for } x \in (a_1,c_3),    
    \end{equation}
    and in particular $c_1 \geq 2 c_3$ needs to be satisfied. Hence, for $\varepsilon > 0$ such that $c_3 - \varepsilon \in (a_1,c_3)$ we find via \eqref{eq:-f_2} that
    \begin{equation}\label{eq:-f_2spike}
        t_v(c_3 - \varepsilon) = - c_3 + \varepsilon + c_1 \geq  c_3 + \varepsilon > c_3.
    \end{equation}
    Moreover, to satisfy $t_v(x) = f_2(x)$ for all $x \in [a,b]$ to hold, there additionally needs to exist an index set $J = \{2,\dots,m\}$ for some $m \in \{2,\dots,n\}$ with $\sum_{j\in J} w_{u_j v} > 0$ and a corresponding non-empty interval $(c_3,b_1) \subset [a,b]$ such that $t_v = c_2$ on $(c_3,b_1)$. Due to \eqref{eq:-f_2spike} we conclude that no spikes from neurons $\{u_j : j \in J\setminus I\}$ reach $v$ before time $c_3$ so that $m \leq \ell$ needs to be satisfied. Finally, assuming that $c_1 - c_3 \neq c_2$ entails that there is a jump  at $c_3$ in $f$, i.e., $f$ is discontinuous at $c_3$. However, a jump discontinuity requires an incoming spike that delays the firing of $v$ for $t_{u_1} = c_3$, which we already excluded. Thus, $t_v(x) = f(x)$ for all $x \in [a,b]$ can not be achieved given that $c_1 - c_3 \neq c_2$.

    For the reverse direction, we will explicitly construct an LSNN neuron $v$ that emulates $f$ if the conditions $c_1 - c_3 = c_2$ and $c_1 \geq 2 c_3$ are fulfilled.
    We introduce an auxiliary input neuron with constant firing time $t_{u_2} \in \R$ and specify the parameter of $\Phi = ((W, D, \Theta))$ in the following manner (see Figure \ref{subfig:OneLayerSNN}):
    \begin{equation*}
        W = \begin{pmatrix}
            -\frac{1}{2} \\
            1
        \end{pmatrix},  
        D = \begin{pmatrix}
            d_1  \\
            d_2
        \end{pmatrix},
        \Theta = \theta, 
    \end{equation*}
    where $\theta,d_1,d_2 >0$ are to be specified. Note that either $u_2$ or $u_1$ together with $u_2$ can trigger a spike in $v$ since $w_{u_1 v} < 0$. Therefore, applying \eqref{eqn:firing_time_specific} yields that $u_2$ triggers a spike in $v$ under the following circumstances:
    \begin{equation*}
        t_{v}(x) = \theta  + t_{u_2} + d_2  \quad \text{ if } t_{v}(x) \leq  t_{u_1} + d_1 = x + d_1.
    \end{equation*}
    Hence, this case only arises when
    \begin{equation*}
        \theta  + t_{u_2} + d_2 \leq  x + d_1 \Leftrightarrow \theta  + t_{u_2} + d_2 - d_1  \leq  x. 
    \end{equation*}    
    To emulate $f$ the parameters need to satisfy 
    \begin{equation*}
        \theta  + t_{u_2} + d_2 - d_1 \leq x \text{ for all } x \in (c_3,b] \quad \text{ and } \quad \theta  + t_{u_2} + d_2 - d_1 > x \text{ for all } x \in [a,c_3)   
    \end{equation*}
    which simplifies to 
    \begin{equation}\label{eq:propCond1}
        \theta  + t_{u_2} + d_2 - d_1 = c_3.
    \end{equation}
    If the additional condition
    \begin{equation}\label{eq:propCond2}
        \theta  + t_{u_2} + d_2 = c_2      
    \end{equation}
    is met, we can infer that for $d_1=c_2 - c_3$ (which is a valid choice due to $c_2 - c_3 = c_1 - 2c_3\geq 2 c_3 - 2 c_3=0$)
    \begin{equation*}
        t_v(x) = \begin{cases}
            2 (\theta + t_{u_2} + d_2) - (x + d_1) &, \text{ if } x < c_3 \\
            \theta  + t_{u_2} + d_2 &, \text{ if } x \geq c_3
        \end{cases} = 
        \begin{cases}
            - x + c_1 &, \text{ if } x < 0 \\
            c_2 &, \text{ if } x \geq 0
        \end{cases}.
    \end{equation*}
    Finally, it is immediate to verify that the conditions \eqref{eq:propCond1} and \eqref{eq:propCond2} can be satisfied simultaneously by choosing $d_2 = d_1$ and $t_{u_2} = c_3 - \theta$.
\end{proof}

Having established the capability or inability to emulate certain simple affine functions by the firing map of an LSNN neuron, we now turn to their realizations. To that end, recall that to realize a function $f:[a,b] \to \R$ we focus on encoding schemes of the type $T_{\text{in}} \slash T_{\text{out}} \pm \cdot$. Therefore, for an LSNN neuron $v$ with input neuron $u_1$ and $x\in [a,b]$ we distinguish the following encoding schemes:
\begin{align*}
    \text{a)}& \quad
    \begin{cases}
        t_{u_1} &= T_{\text{in}} + x =: z \\
        t_v(z) &= T_{\text{out}} + \mathcal{R}_v(x) \Leftrightarrow \mathcal{R}_v(x) = - T_{\text{out}} + t_v(z)    
    \end{cases}
    \\
    \text{b)}& \quad
    \begin{cases}
        t_{u_1} &= T_{\text{in}} - x =: z \\
        t_v(z) &= T_{\text{out}} - \mathcal{R}_v(x) \Leftrightarrow \mathcal{R}_v(x) =  T_{\text{out}} -t_v(z)    
    \end{cases}      
    \\
    \text{c)}& \quad
    \begin{cases}
        t_{u_1} &= T_{\text{in}} - x =: z \\
        t_v(z) &= T_{\text{out}} + \mathcal{R}_v(x) \Leftrightarrow \mathcal{R}_v(x) = - T_{\text{out}} + t_v(z)    
    \end{cases}   
    \\
    \text{d)}& \quad
    \begin{cases}
        t_{u_1} &= T_{\text{in}} + x =: z \\
        t_v(z) &= T_{\text{out}} - \mathcal{R}_v(x) \Leftrightarrow \mathcal{R}_v(x) = T_{\text{out}} - t_v(z)    
    \end{cases}   
\end{align*}
Note that cases a) and b) describe consistent input and output encoding schemes, whereas c) and d) provide inconsistent in the sense that input and output formalism do not match. Consistency is an important property when stacking LSNNs because otherwise, the realization of the stacked network does not match the composition of the subnetworks in general; see Section \ref{section:SNN_calculus}. 

Now, consider the function $f_1:[a,b] \to \R$ given by
\begin{equation*}
    f_1(x) = \begin{cases}
        x + c_1 &, \text{ if } x > c_3 \\
        c_2 &, \text{ if } x < c_3
    \end{cases}
\end{equation*}
for some constants $c_1, c_2 \in \R$ and $c_3 \in (a,b)$. To realize $f_1$ by an LSNN neuron $v$ via scheme a) we need
\begin{equation*}
    f_1(x) = \mathcal{R}_v(x) = - T_{\text{out}} + t_v(z) \quad \text{ for all } x \in [a,b],
\end{equation*}
i.e.,
\begin{equation}\label{eq:fctfiringmap}
    t_v(z) = \begin{cases}
        z  + T_{\text{out}} - T_{\text{in}} + c_1 &, \text{ if } z - T_{\text{in}} > c_3 \\
        c_2 + T_{\text{out}} &, \text{ if } z - T_{\text{in}} < c_3
    \end{cases}
    \Leftrightarrow
    t_v(z) = \begin{cases}
        z  + c^\prime_1 &, \text{ if } z > c^\prime_3 \\
        c^\prime_2 &, \text{ if } z  < c^\prime_3
    \end{cases},    
\end{equation}
where $c^\prime_1 = T_{\text{out}} - T_{\text{in}} + c_1$, $c^\prime_2 =  c_2 + T_{\text{out}}$, and $c^\prime_3 = c_3 + T_{\text{in}}$. However, Proposition \ref{prop:f_1} implies that the sought function in \eqref{eq:fctfiringmap} can not be expressed as the firing map of $v$. Hence, $f_1$ can not be realized by an LSNN neuron with encoding scheme a). Similar observations can be made for the other encoding schemes and the realization of the functions in Proposition \ref{prop:f_1} and \ref{prop:4beforerelu}. Our goal is to realize the ReLU activation function and it is now straightforward to verify that ReLU can not be realized by an LSNN neuron with consistent encoding. In contrast, for the inconsistent scheme c) we obtain for a given input domain $[a,b]$
\begin{equation*}
    \sigma(x) = \mathcal{R}_v(x) = - T_{\text{out}} + t_v(z),
\end{equation*}
i.e.,
\begin{equation*}
    t_v(z) = \begin{cases}
        - z  + T_{\text{out}} + T_{\text{in}} &, \text{ if } z \leq T_{\text{in}} \\
        T_{\text{out}} &, \text{ if } z > T_{\text{in}}
    \end{cases},   
\end{equation*}
which can be expressed by the firing map of $v$ for a suitable choice of $T_{\text{out}}$ and $T_{\text{in}}$; see Proposition \ref{prop:4beforerelu}. We summarize these observations in our next result.
\begin{proposition}
    An LSNN neuron can not realize ReLU on a given domain with consistent encoding, whereas ReLU can be realized when applying an inconsistent encoding.
\end{proposition}
Is it possible to realize ReLU with a consistent encoding? To realize $\sigma$ by an LSNN $\Phi$ via scheme a) we need
\begin{equation}\label{eq:sigmaRealizing}
    \sigma(x) = \mathcal{R}_\Phi(x) = - T_{\text{out}} + t_v(z) \quad \text{ for all } x \in [a,b],
\end{equation}
i.e.,
\begin{equation}\label{eq:sigmaRealizingFiring}
    t_v(z) = 
        \begin{cases}
            z  + T_{\text{out}} - T_{\text{in}}  &, \text{ if } z - T_{\text{in}} > 0 \\
            T_{\text{out}} &, \text{ if } z - T_{\text{in}} \leq 0
        \end{cases}
    \Leftrightarrow
    t_v(z) = 
        \begin{cases}
            z  + T_{\text{out}} - T_{\text{in}}  &, \text{ if } z >  T_{\text{in}} \\
            T_{\text{out}} &, \text{ if } z \leq  T_{\text{in}}
        \end{cases}.
\end{equation}
We proceed by constructing a two-layer LSNN that indeed achieves this goal.

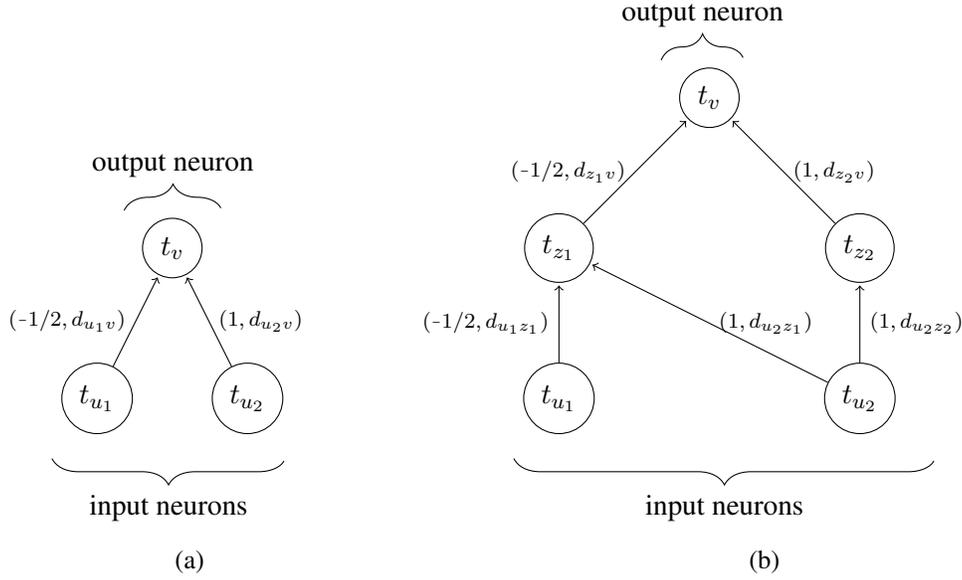
\begin{figure}[t]
	\centering
 \begin{subfigure}[b]{0.49\textwidth}
     \centering
     \hspace{-1cm}
     \begin{tikzpicture}[shorten >=1pt]
		\tikzstyle{unit}=[draw,shape=circle,minimum size=0.5cm]
		\tikzstyle{hidden}=[draw,shape=circle,fill=black!25,minimum size=0.5cm]
		\tikzstyle{hidden}=[draw,shape=circle,minimum size=0.5cm]

            \node[unit](x0_0) at (-7,3.5){$t_{u_1}$};
            \node[unit](x0_1) at (-5,3.5){$t_{u_2}$};
            \node[unit](o1) at (-6,5.5){$t_v$};            
            \draw[->] (x0_0) -- (o1) node[midway,left] {\scriptsize $(\shortminus1/2,d_{u_1v})$};
            \draw[->] (x0_1) -- (o1) node[midway,right] {\scriptsize $(1,d_{u_2v})$};

            \draw [decorate,decoration={brace,amplitude=10pt,mirror}](-7.6,2.7) -- (-4.5,2.7) node[midway,yshift=-1.7em]{input neurons};
            \draw [decorate,decoration={brace,amplitude=10pt},xshift=-4pt,yshift=0pt] (-6.5,6) -- (-5.2,6) node [black,midway,yshift=+0.6cm]{output neuron};
            \end{tikzpicture}
            \caption{}
            \label{subfig:OneLayerSNN}
 \end{subfigure}
 \begin{subfigure}[b]{0.5\textwidth}
     \centering
     \hspace{-2cm}
     \begin{tikzpicture}[shorten >=1pt]
		\tikzstyle{unit}=[draw,shape=circle,minimum size=0.5cm]
		\tikzstyle{hidden}=[draw,shape=circle,fill=black!25,minimum size=0.5cm]
		\tikzstyle{hidden}=[draw,shape=circle,minimum size=0.5cm]

            \node[unit](x0) at (0,3.5){$t_{u_1}$};
            \node[unit](xd) at (4,3.5){$t_{u_2}$};
            \node[hidden](h10) at (0,5.5){$t_{z_1}$}; 
		\node[hidden](h11) at (4,5.5){$t_{z_2}$}; 
            \node[unit](output) at (2,7.5){$t_v$}; 
            \draw[->] (x0) -- (h10) node[midway,left] {\scriptsize $(\shortminus 1/2,d_{u_1z_1})$};
            \draw[->] (xd) -- (h10) node[midway,right] {\scriptsize $(1,d_{u_2z_1})$};
		\draw[->] (xd) -- (h11) node[midway,right] {\scriptsize $(1,d_{u_2z_2})$};
            \draw[->] (h10) -- (output) node[midway,left] {\scriptsize $(\shortminus1/2,d_{z_1v})$};
            \draw[->] (h11) -- (output) node[midway,right] {\scriptsize $(1,d_{z_2v})$};
    \draw [decorate,decoration={brace,amplitude=10pt,mirror}](-0.6,2.7) -- (5,2.7) node[midway,yshift=-1.7em]{input neurons};
		\draw [decorate,decoration={brace,amplitude=10pt},xshift=-4pt,yshift=0pt] (1.5,8) -- (2.6,8) node [black,midway,yshift=+0.6cm]{output neuron};
  \end{tikzpicture}
  \caption{}
    \label{subfig:TwoLayerSNN}
     \end{subfigure}
	\caption{a) Computation graph associated with an LSNN with two input neurons and one output neuron that realizes $f$ as defined in Proposition \ref{prop:4beforerelu}. (b) Stacking the network in (a) twice results in an LSNN that realizes the ReLU activation function. }
\end{figure}

\begin{proposition}\label{prop:GenReLu}
     Let $c_1, c_2 \in \R$ be such that $c_2 > c_1$ and $c_1 + c_2 > 2 b$. Consider $f: [a,b] \to \R$ defined as 
    \begin{equation*}
        f(x) = \begin{cases}
            x + c_2 - c_1 &, \text{ if } x > c_1 \\
            c_2 &, \text{ if } x \leq c_1
        \end{cases}.
    \end{equation*}
    There exists a two-layer LSNN $\Phi$ with output neuron $v$ and input neuron $u_1$ such that $t_v(x) = f(x)$ on $[a,b]$, where $t_v(x)$ denotes the firing time of $v$ on input $t_{u_1} = x$.    
\end{proposition}
\begin{proof} 
    We introduce an auxiliary input neuron $u_2$ with constant firing time $t_{u_2} \in \R$  and specify the parameter of $\Phi = ((W^1, D^1, \Theta^1), (W^2, D^2, \Theta^2))$ in the following manner:
    \begin{equation}\label{eq:SNN_Relu_param}
        W^1 = \begin{pmatrix}
            -\frac{1}{2} & 0 \\
            1 & 1 
        \end{pmatrix},  
        D^1 = \begin{pmatrix}
            d & 0 \\
            d & d
        \end{pmatrix},
        \Theta^1 = \begin{pmatrix}
            \theta \\
            \theta
        \end{pmatrix}, 
        W^2= \begin{pmatrix}
            -\frac{1}{2}  \\
            1 
        \end{pmatrix},
        D^2 = \begin{pmatrix}
            d \\
            d
        \end{pmatrix},
        \Theta^2 = \theta,
    \end{equation}
    where $d \geq 0$ and $\theta > 0$ is chosen such that $\theta + t_{u_2}  > b$.
    We denote the input neurons by $u_1, u_2$, the neurons in the hidden layer by $z_1, z_2$, and the output neuron by $v$. Note that the firing time of $z_1$ depends on $u_1$ and $u_2$. In particular, either $u_2$ or $u_1$ together with $u_2$ can trigger a spike in $z_1$ since $w_{u_1 z_1} < 0$. Therefore, applying \eqref{eqn:firing_time} yields that $u_2$ triggers a spike in $z_1$ under the following circumstances:
    \begin{equation*}
        t_{z_1}(x) = \theta  + t_{u_2} + d  \quad \text{ if } t_{z_1}(x) \leq  t_{u_1} + d = x + d.
    \end{equation*}
    Hence, this case only arises when
    \begin{equation}\label{eq:propCond}
        \theta  + t_{u_2} + d \leq  x + d \Leftrightarrow \theta  + t_{u_2}  \leq  x. 
    \end{equation}    
    However, by construction $\theta  + t_{u_2} > b$, so that \eqref{eq:propCond} does not hold for any $x \in [a,b]$. Thus, we conclude via \eqref{eqn:firing_time_specific} that
    \begin{equation*}
        t_{z_1}(x) = 2 (\theta  + t_{u_2} + d) - (x +d) =  2 (\theta  + t_{u_2} ) + d - x .
    \end{equation*}
    By construction, the firing time $t_{z_2} = \theta + t_{u_2} + d$ of $z_2$ is a constant which depends on the input only via $u_2$. A similar analysis as in the first layer shows that  
    \begin{equation*}
        t_{v}(x) = \theta  + t_{z_2} + d  \quad \text{ if } t_{v}(x) \leq  t_{z_1} + d = 2 (\theta  + t_{u_2}) + d - x + d = 2 (\theta  + t_{u_2} + d) - x.
    \end{equation*}
    Hence, $z_2$ triggers a spike in $v$ when
    \begin{equation*}
        \theta + t_{z_2} + d = \theta  + \theta + t_{u_2} + d + d \leq 2 (\theta  + t_{u_2} +d) - x \quad  \Leftrightarrow \quad x  \leq  t_{u_2}. 
    \end{equation*}     
    If the additional condition
    \begin{equation}\label{eq:propCond22}
        \theta  + t_{z_2} + d =  c_2 \quad \Leftrightarrow \quad 2(\theta +d)  + t_{u_2} = c_2    
    \end{equation}
    is met, we can infer that 
    \begin{align*}
        t_v(x) &= \begin{cases}
            2 (\theta + t_{z_2} + d) - (t_{z_1}(x) + d) &, \text{ if } x  >  t_{u_2} \\
            \theta  + t_{z_2} + d &, \text{ if }  x  \leq  t_{u_2}
        \end{cases}\\
        &= 
        \begin{cases}
            2c_2 - (2 (\theta  + t_{u_2} ) + d - x + d) &, \text{ if } x  >  t_{u_2} \\
            c_2 &, \text{ if } x  \leq  t_{u_2}
        \end{cases} \\
        &= 
        \begin{cases}
            x + c_2 - t_{u_2} &, \text{ if } x  >  t_{u_2} \\
            c_2 &, \text{ if } x  \leq  t_{u_2}
        \end{cases}. 
    \end{align*}
    Setting $t_{u_2}=c_1$ gives  
    \begin{equation*}
        t_v(x) =
        \begin{cases}
            x + c_2 - c_1 &, \text{ if } x  >  c_1 \\
            c_2 &, \text{ if } x  \leq  c_1
        \end{cases},    
    \end{equation*}
    and we observe that for $\theta = \frac{1}{2}(c_2-c_1) - 2d$, which is a valid threshold provided that $d$ is small, \eqref{eq:propCond22} as well as $\theta + t_{u_2} = \theta + c_1 > b - 2d > b$ is satisfied for a suitable choice of $d$. Hence, $\Phi$ emulates $f$ as desired.
\end{proof}

Finally, we state the implication of this proposition for the ReLU realization, which is a direct implication of \eqref{eq:sigmaRealizing} and \eqref{eq:sigmaRealizingFiring}. 

\begin{proposition}\label{prop:ReLURealize}
    There exists a two-layer LSNN that realizes ReLU on a given bounded domain with a consistent encoding scheme.
\end{proposition}

\subsection{Realizing ReLU networks by LSNNs}
\label{sec:approx_relu_network_using_SNN}
In this section, we show that an LSNN has the capability to reproduce the output of any ReLU network. Specifically, given access to the weights and biases of an ANN, we construct an LSNN and set the parameter values based on the weights and biases of the given ANN. This leads us to the desired result.
The essential part of our proof revolves around choosing the parameters of an LSNN such that it effectively realizes the composition of an affine-linear map and the non-linearity represented by the ReLU activation. The realization of ReLU with LSNNs is proved in the previous Section \ref{sec:RealizingReLU_app}. To realize an affine-linear function using an LSNN neuron, it is necessary to ensure that the spikes from all the input neurons together result in the firing of an output neuron instead of any subset of the input neurons. We achieve that by appropriately adjusting the value of the threshold parameter. As a result, an LSNN neuron, which implements an affine-linear map, avoids partitioning of the input space.

\begin{figure}[t]
	\centering
    
    
    \begin{subfigure}[b]{0.32\textwidth}
     \centering
     \begin{tikzpicture}[node distance=3cm, scale = 1.35]
    \tikzstyle{unit}=[draw,shape=circle,minimum size=0.3cm]
    \tikzstyle{hidden}=[draw,shape=circle,fill=black!25,minimum size=0.2cm]
    \tikzstyle{hidden}=[draw,shape=circle,minimum size=0.2cm]
    \tikzstyle{auxin}=[draw = red,shape=circle,thick, minimum size=0.3cm]
    \tikzstyle{auxhd}=[draw = red,shape=circle,thick, minimum size=0.2cm]
    \tikzstyle{auxhdd}=[draw = red,shape=circle,dotted, minimum size=0.2cm]
    \tikzstyle{auxind}=[draw = red,shape=circle,dotted, minimum size=0.3cm]
    \node[unit] (input-1) at (0, -3) {};
    \node[unit] (input-2) at (1, -3) {};
    \node[auxin] (input-3) at (2, -3) {};
    \node[hidden] (hidden-1) at (1, -2) {};
    \node[auxhd] (hidden-2) at (2, -2) {};
    
    \node[hidden] (hidden-1_2) at (1, -1) {};
    \node[auxhd] (hidden-2_2) at (2, -1) {};  
      
    \node[unit] (output) at (1.5, 0) {};
    
        \draw[->] (input-1) -- (hidden-1) node[midway,above] {};
        \draw[->] (input-2) -- (hidden-1) node[midway,below] {};
        \draw[->] (input-3) -- (hidden-1) node[midway,above] {};
        \draw[->] (input-3) -- (hidden-2) node[midway,above] {};
        \draw[->] (hidden-1) -- (hidden-1_2) node[midway,above] {};
        \draw[->] (hidden-2) -- (hidden-1_2) node[midway,above] {};
        \draw[->] (hidden-2) -- (hidden-2_2) node[midway,above] {};
        \draw[->] (hidden-1_2) -- (output) node[midway,above] {};
        \draw[->] (hidden-2_2) -- (output) node[midway,above] {};

    \draw[dotted] (.7,-2.2) rectangle (2.3,0.2);
    \draw[dotted](-.2,-3.2) -- (2.4,-3.2);
    \draw[dotted](-.2,-3.2) -- (-.2,-1.8);
    \draw[dotted](-.2,-1.8) -- (1.5,-1.8);
    \draw[dotted](1.5,-1.8) -- (2.4,-3.2);
    
    \node at (0, -1.6) {\scriptsize $\mathcal{R}_f$};
    \node at (0.5, -0.2) {\scriptsize $\mathcal{R}_\sigma$};
    \node at (0, -3.5) {$t_{u_1}$};
    \node at (1, -3.5) {$t_{u_2}$};
    \node at (2, -3.5) {$t_{u_3}$};
    \node at (1.5, .5) {$t_v$};
    \draw[->, >=stealth, line width=1pt] (2.5, -1.5) -- (4.2, -1.5) node[midway, above] {\scriptsize Removal of};
    \draw[->, >=stealth, line width=1pt] (2.5, -1.5) -- (4.2, -1.5) node[midway, below] {\scriptsize auxiliary neurons};
    \end{tikzpicture}
    \caption{}
    \label{subfig:sigmafsnn}
    \end{subfigure}    
\begin{subfigure}[b]{0.33\textwidth}
     \centering
     \begin{tikzpicture}[node distance=3cm, scale = 1.35]
    \tikzstyle{unit}=[draw,shape=circle,minimum size=0.3cm]
    \tikzstyle{hidden}=[draw,shape=circle,fill=black!25,minimum size=0.2cm]
    \tikzstyle{hidden}=[draw,shape=circle,minimum size=0.2cm]
    \tikzstyle{auxin}=[draw = red,shape=circle,thick, minimum size=0.3cm]
    \tikzstyle{auxhd}=[draw = red,shape=circle,thick, minimum size=0.2cm]
    \tikzstyle{auxhdd}=[draw = red,shape=circle,dotted, minimum size=0.2cm]
    \tikzstyle{auxind}=[draw = red,shape=circle,dotted, minimum size=0.3cm]
    
    \node[unit] (input-11) at (5, -3) {};
    \node[unit] (input-12) at (6, -3) {};
    \node[auxind] (input-13) at (7, -3) {};
    
    \node[hidden] (hidden-11) at (6, -2) {};
    \node[auxhdd] (hidden-12) at (7, -2) {};
    
    \node[hidden] (hidden-11_2) at (6, -1) {};
    \node[auxhdd] (hidden-12_2) at (7, -1) {};
      
    \node[unit] (output) at (6.5, 0) {};
    
        \draw[->] (input-11) -- (hidden-11) node[midway,above] {};
        \draw[->] (input-12) -- (hidden-11) node[midway,below] {};
        \draw[->][dotted] (input-13) -- (hidden-11) node[midway,above] {};
        \draw[->][dotted] (input-13) -- (hidden-12) node[midway,above] {};
        \draw[->] (hidden-11) -- (hidden-11_2) node[midway,above] {};
        \draw[dotted][->] (hidden-12) -- (hidden-12_2) node[midway,above] {};
        \draw[->][dotted] (hidden-12) -- (hidden-11_2) node[midway,above] {};
        \draw[->] (hidden-11_2) -- (output) node[midway,above] {};
        \draw[->][dotted] (hidden-12_2) -- (output) node[midway,above] {};

    \draw[thin] (6.7,-3.3) rectangle (7.3,-0.7);
    
    \node at (5, -3.5) {$t_{u_1}$};
    \node at (6, -3.5) {$t_{u_2}$};
    \node at (7, -3.5) {$t_{u_3}$};
    \node at (6.5, .5) {$t_v$};
    \draw[->, >=stealth, line width=1pt] (7.5, -1.5) -- (9.2, -1.5) node[midway, above] {\scriptsize Parallelization};
    \end{tikzpicture}
    \caption{}
    \label{subfig:parallel_aux}
    \end{subfigure}
\begin{subfigure}[b]{0.32\textwidth}
     \centering
     \hspace{0.5cm}
     \begin{tikzpicture}[node distance=3cm, scale = 1.35]
    \tikzstyle{unit}=[draw,shape=circle,minimum size=0.3cm]
    \tikzstyle{hidden}=[draw,shape=circle,fill=black!25,minimum size=0.2cm]
    \tikzstyle{hidden}=[draw,shape=circle,minimum size=0.2cm]
    \tikzstyle{auxin}=[draw = red,shape=circle,thick, minimum size=0.3cm]
    \tikzstyle{auxhd}=[draw = red,shape=circle,thick, minimum size=0.2cm]
    \tikzstyle{auxhdd}=[draw = red,shape=circle,dotted, minimum size=0.2cm]
    \tikzstyle{auxind}=[draw = red,shape=circle,dotted, minimum size=0.3cm]
    \node[unit] (input_1) at (9, -3) {};
    \node[unit] (input_2) at (10, -3) {};
    \node[auxin] (input_3) at (11, -3) {};
    
    \node[hidden] (hidden_1) at (9, -2) {};
    \node[hidden] (hidden_2) at (10, -2) {};
    \node[auxhd] (hidden_3) at (11, -2) {};
    
    \node[hidden] (hidden_1_2) at (9, -1) {};
    \node[hidden] (hidden_2_2) at (10, -1) {};  
    \node[auxhd] (hidden_3_2) at (11, -1) {};  
      
    \node[unit] (output_p1) at (9, 0) {};
    \node[unit] (output_p2) at (10, 0) {};
    \node[auxin] (output_p3) at (11, 0) {};
    
        \draw[->] (input_1) -- (hidden_1) node[midway,above] {};
        \draw[->] (input_2) -- (hidden_1) node[midway,below] {};
        \draw[->] (input_3) -- (hidden_1) node[midway,above] {};
        \draw[->] (input_1) -- (hidden_2) node[midway,above] {};
        \draw[->] (input_2) -- (hidden_2) node[midway,below] {};
        \draw[->] (input_3) -- (hidden_2) node[midway,above] {};
        \draw[->] (input_3) -- (hidden_3) node[midway,above] {};
        \draw[->] (hidden_1) -- (hidden_1_2) node[midway,above] {};
        \draw[->] (hidden_3) -- (hidden_1_2) node[midway,above] {};
        \draw[->] (hidden_2) -- (hidden_2_2) node[midway,above] {};
        \draw[->] (hidden_3) -- (hidden_2_2) node[midway,above] {};
        \draw[->] (hidden_3) -- (hidden_3_2) node[midway,above] {};
        
        \draw[->] (hidden_1_2) -- (output_p1) node[midway,above] {};
        \draw[->] (hidden_3_2) -- (output_p1) node[midway,above] {};
        \draw[->] (hidden_2_2) -- (output_p2) node[midway,above] {};
        \draw[->] (hidden_3_2) -- (output_p2) node[midway,above] {};
        \draw[->] (hidden_3_2) -- (output_p3) node[midway,above] {};

    \node at (9, -3.5) {$t_{u_1}$};
    \node at (10, -3.5) {$t_{u_2}$};
    \node at (11, -3.5) {$t_{u_3}$};
    \node at (9, .5) {$t_{v_1}$};
    \node at (10, .5) {$t_{v_2}$};
    \end{tikzpicture}
    \caption{}
    \label{subfig:parallel}
    \end{subfigure}
    \caption{(a) Computation graph associated with an LSNN $\Phi^{\sigma \circ f}$ resulting from the concatenation of $\Phi^\sigma$ and $\Phi^f$ that realizes $\sigma(f(x_1,x_2))$, where $f$ is an affine function and $\sigma$ is the ReLU non-linearity. The auxiliary neurons are shown in red. (b) Same computation graph as in (a); when parallelizing two identical networks, the dotted auxiliary neurons can be removed and auxiliary neurons from (a) can be used for each network instead. (c) Computation graph associated with an LSNN as a result of the parallelization of two subnetworks $\Phi^{\sigma \circ f_1}$ and $\Phi^{\sigma \circ f_2}$. The auxiliary neuron in the output layer serves the same purpose as the auxiliary neuron in the input layer and is needed when concatenating two such subnetworks $\Phi^{\sigma \circ f}$.}
    \label{fig:SNNReLUConstruction}
\end{figure}
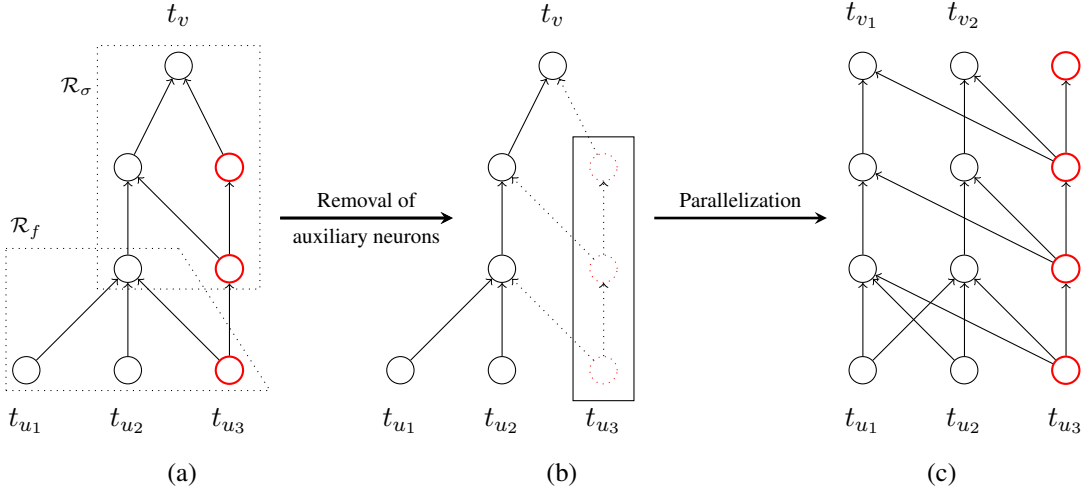

\paragraph{Setup for the proof of Theorem \ref{thm:approx_SNN_from_ANN}} 
Let $d, L \in \N$ be the width and the depth of an ANN $\Psi$, respectively, i.e.,
\begin{align*}
    \Psi = ((A^1, B^1), (A^2, B^2), \dots, (A^L, B^L)),  \text{ where } 
     &(A^\ell,B^\ell) \in \R^{d\times d}\times \R^d, 1 \leq \ell < L,\\
     &(A^L,B^L) \in \R^{d\times d} \times \R^d.
\end{align*}
For a given input domain $[a,b]^d \subset \R^d$, we denote by $\Psi^\ell = ((A^\ell, B^\ell))$ the $\ell$-th layer, where $y^0 \in [a,b]^d$ and
\begin{align}\label{eq:App_ANN}
    y^l &= \cR_{\Psi^l}(y^{l-1}) = \sigma((A^l)^T y^{l-1} + B^l), 1 \leq \ell < L, \nonumber \\
    y^L &= \cR_{\Psi^L}(y^{L-1}) = (A^L)^T y^{L-1} + B^L
\end{align}
so that $\mathcal{R}_\Psi = \mathcal{R}_{\Psi^L} \circ \cdots \circ \mathcal{R}_{\Psi^1}$. 

For the construction of the corresponding LSNN, we refer to the associated weights and delays between two LSNN neurons $u$ and $v$ by $w_{u v}$ and $d_{u v}$, respectively.

\begin{proof}[\textbf{of Theorem \ref{thm:approx_SNN_from_ANN}}]
Any multi-layer ANN $\Psi$ with ReLU activation is simply an alternating composition of affine-linear functions $(A^l)^T y^{l-1} + B^l$ and a non-linear function represented by $\sigma$. To generate the mapping realized by $\Psi$, it suffices to realize the composition of affine-linear functions and the ReLU non-linearity and then extend the construction to the whole network using concatenation and parallelization operations. We prove the result via the following steps; see also Figure \ref{fig:SNNReLUConstruction} for a depiction of the intermediate constructions.

\textbf{Step 1:} Realizing ReLU non-linearity.\\ 
Proposition \ref{prop:ReLURealize} gives the desired result. 

\textbf{Step 2:} Realizing affine-linear functions with one-dimensional range. \\
Let $f: [a,b]^d \to \R$ be an affine-linear function
\begin{equation}\label{eq:AffLin}
    f(x) = C^T x + s , \quad C^T = (c_1,\dots,c_d) \in \R^d, s \in \R.    
\end{equation}
Consider a one-layer LSNN that consists of an output neuron $v$ and d input units $u_1, \dots,u_d$. Via \eqref{eqn:firing_time_specific} the firing time of $v$ as a function of the input firing times on the linear region $R^I$ corresponding to the index set $I=\{1,\dots,d\}$ is given by
\begin{equation*}
    t_v(t_{u_1}, \dots,t_{u_d}) = \frac{\theta_v}{\sum_{i \in I} w_{u_i v}} + \frac{\sum_{i \in I} w_{u_i v}(t_{u_i}+d_{u_i v})}{\sum_{i \in I} w_{u_i v}} \quad \text{provided that }  \sum_{i \in I} w_{u_i v} >0.  
\end{equation*}
Introducing an auxiliary input neuron $u_{d+1}$ with weight $w_{u_{d+1} v} = 1 - \sum_{i \in I} w_{u_i v}$ ensures that $\sum_{i \in I \cup \{d+1\}} w_{u_i v} > 0$ and leads to the firing time 
\begin{equation*}
    t_v(t_{u_1}, \dots,t_{u_{d+1}}) = \theta_v + \sum_{i \in I \cup \{d+1\}} w_{u_i v}(t_{u_i}+d_{u_i v}) \quad \text{ on } R^{I \cup \{d+1\}}.  
\end{equation*}
Setting $w_{u_i v} = c_i$ for $i\in I$ and $d_{u_j v} = d^\prime\geq 0$ for $j\in I \cup \{d+1\}$ yields
\begin{equation*}
    t_v(t_{u_1}, \dots,t_{u_{d+1}}) = \theta_v + w_{u_{d+1} v} \cdot t_{u_{d+1}} + d^\prime + \sum_{i \in I} c_i t_{u_i} \text{ on } R^{I \cup \{d+1\}} \cap [a,b]^d.  
\end{equation*}
Therefore, an LSNN $\Phi^f =(W,D,\Theta)$ with parameters
\begin{equation*}
        W = \begin{pmatrix}
            c_1 \\
            \vdots\\
            c_{d+1} 
        \end{pmatrix},  
        D = \begin{pmatrix}
            d^\prime  \\
            \vdots\\
            d^\prime
        \end{pmatrix},
        \Theta = \theta > 0, \quad \text{ where }  c_{d+1} =  1 - \sum_{i\in I} c_i, 
\end{equation*}
and the usual encoding scheme $T_{\text{in}} \slash  T_{\text{out}} + \cdot$ and fixed firing time $t_{u_{d+1}} = t_{\text{in}}$, whereby we employ the notation from Definition \ref{definition:encoding},  realizes
\begin{align}\label{eq:App_Raf}
    \mathcal{R}_{\Phi^f} (x) &= - t_{\text{out}} +  t_v(t_{\text{in}} + x_1, \dots,t_{\text{in}} + x_d, t_{\text{in}}) =  - t_{\text{out}} +  \theta + t_{\text{in}} + d^\prime + \sum_{i \in I} c_i x_i \\
    &=  - t_{\text{out}} + \theta + t_{\text{in}} + d^\prime + f(x_1, \dots,x_d) - s \quad \text{ on } R^{I \cup \{d+1\}} \cap [a,b]^d.  
\end{align}
Choosing a large enough threshold $\theta$ ensures that a spike in $v$ is necessarily triggered after all the spikes from $u_1,\dots, u_{d+1}$ reached $v$ so that $[a,b]^d \subset R^{I \cup \{d+1\}}$ holds. It suffices to set  
\begin{equation*}
    \theta \geq \sup_{x\in [a,b]^d} \sup_{x_{\text{min}} \leq t - t_{\text{in}} - d^\prime \leq x_{\text{max}}} P_v(t), 
\end{equation*}
where $x_{\text{min}} = \min\{x_1,\dots,x_d,0\}$ and $x_{\text{max}} = \max\{x_1,\dots,x_d,0\}$, since this implies that the potential $P_v(t)$ is smaller than the threshold to trigger a spike in $v$ on the time interval associated to feasible input spikes, i.e., $v$ emits a spike after the last spike from an input neuron arrived at $v$. Applying \eqref{eqn:firing_time} shows that for $x \in [a,b]^d$ and $t \in [x_{\text{min}} + t_{\text{in}} + d^\prime, x_{\text{max}} + t_{\text{in}} + d^\prime]$
\begin{align*}
    P_v(t) &= \sum_{i \in I} w_{u_i v} (t - (t_{\text{in}} + x_i) - d_{u_i v}) + w_{u_{d+1}v}(t - t_{\text{in}}- d_{u_{d+1} v}) =  t - d^\prime - t_{\text{in}} + \sum_{i \in I} c_i x_i \\
    &\leq x_{\text{max}} + d\norm{C}_{\infty} \norm{x}_{\infty} \leq (1 + d\norm{C}_{\infty}) \max\{|a|, |b|\}.
\end{align*}
Hence, we set
\begin{equation*}
    \theta = (1 + d\norm{C}_{\infty}) \max\{|a|, |b|\} + s + |s| \quad \text{ and } \quad  t_{\text{out}} = \theta - s + t_{\text{in}} + d^\prime
\end{equation*}
to obtain via \eqref{eq:App_Raf} that
\begin{equation}\label{eq:App_Explain}
    \mathcal{R}_{\Phi^f} (x) = - t_{\text{out}} +  t_v(t_{\text{in}} + x_1, \dots,t_{\text{in}} + x_d, t_{\text{in}}) =  f(x) \quad \text{ for } x \in [a,b]^d.     
\end{equation}
Note that the reference time $t_{\text{out}} = (1 + d\norm{C}_{\infty}) \max\{|a|, |b|\} + |s| + t_{\text{in}} + d^\prime$ is independent of the specific parameters of $f$ in the sense that only upper bounds $\norm{C}_{\infty}, |s|$ on the parameters are relevant. Therefore, $t_{\text{out}}$ (with the associated choice of $\theta$) can be applied for different affine linear functions as long as the upper bounds remain valid. This is necessary for the composition and parallelization of subnetworks in the subsequent construction.

\textbf{Step 3:} Realizing compositions of affine-linear functions with one-dimensional range and ReLU. \\
The next step is to realize the composition of ReLU $\sigma$ with an affine linear mapping $f$ defined in \eqref{eq:AffLin}. To that end, we want to concatenate the networks $\Phi^\sigma$ and $\Phi^f$ constructed in Step 1 and Step 2, respectively, via Lemma \ref{lemma:concatenation}. To employ the concatenation operation we need to perform the following steps:
\begin{enumerate}
    \item Find an appropriate input domain $[a^\prime, b^\prime] \subset \R$, that contains the image $f([a,b]^d)$ so that parameters and reference times of $\Phi^\sigma$ can be fixed appropriately (see Proposition \ref{prop:ReLURealize} for the detailed conditions on how to choose the parameter).
    \item Ensure that the output reference time $t^f_{\text{out}}$ of $\Phi^f$ equals the input reference time $t^\sigma_{\text{in}}$ of $\Phi^\sigma$.
    \item Ensure that the number of neurons in the output layer of $\Phi^f$ is the same as the number of input neurons in $\Phi^\sigma$.
\end{enumerate}
For the first point, note that
\begin{equation*}
    |f(x)| = |C^T x + s| \leq d\norm{C}_\infty \cdot \norm{x}_\infty + |s| \leq d\norm{C}_\infty \cdot \max\{|a|, |b|\} + |s| \text{ for all } x \in [a,b]^d.
\end{equation*}
Hence, we can use the input domain 
\begin{equation*}
    [a^\prime, b^\prime] = [- d\norm{C}_\infty \cdot \max\{|a|, |b|\} + |s|, d\norm{C}_\infty \cdot \max\{|a|, |b|\} + |s|]    
\end{equation*} 
and specify the parameters of $\Phi^\sigma$ accordingly. Additionally, recall from Proposition \ref{prop:ReLURealize} that $t^\sigma_{\text{in}}$ can be chosen freely, so we may fix $t^\sigma_{\text{in}} = t^f_{\text{out}}$, where $t^f_{\text{out}}$ is established in Step 2. It remains to consider the third point. To realize ReLU, an additional auxiliary neuron in the input layer of $\Phi^\sigma$ with constant input $t^\sigma_{\text{in}}$ was introduced. Hence, we also need to add an auxiliary output neuron in $\Phi^f$ with (constant) firing time $t^f_{\text{out}}= t^\sigma_{\text{in}}$ so that the corresponding output and input dimension and their specification match. This is achieved by introducing a single synapse from the auxiliary neuron in the input layer of $\Phi^f$ to the newly added output neuron and by specifying the parameters of the newly introduced synapse and neuron suitably. Formally, the adapted network $\Phi^f=(W,D,\Theta)$ is given by
\begin{equation*}
        W = \begin{pmatrix}
            c_1 & 0\\
            \vdots & \vdots \\
            c_d & 0 \\
            c_{d+1} & 1 
        \end{pmatrix},  
        D = \begin{pmatrix}
            d^\prime & 0 \\
            \vdots & \vdots\\
            d^\prime & 0 \\
            d^\prime & d^\prime 
        \end{pmatrix},
        \Theta = \begin{pmatrix}
            \theta \\
            t^f_{\text{out}} - t^f_{\text{in}} - d^\prime 
        \end{pmatrix}, 
\end{equation*}
where the values of the parameters are specified in Step 2.

Then the realization of the concatenated network $\Phi^{\sigma\circ f}$ is the composition of the individual realizations. This is exemplarily demonstrated in Figure \ref{subfig:sigmafsnn} for the two-dimensional input case. 
By analyzing $\Phi^{\sigma\circ f}$, we conclude that a three-layer LSNN with 
\begin{equation*}
    N(\Phi^{\sigma\circ f}) = N(\Phi^{\sigma}) - N_{0}(\Phi^{\sigma}) + N(\Phi^{f})  =  5 - 2 + d + 3 = d+6
\end{equation*}
computational units can realize $\sigma \circ f$ on $[a,b]^d$, where $N_{0}(\Phi^{\sigma})$ denotes the number of neurons in the input layer of $\Phi^{\sigma}$. 

\textbf{Step 4:} Realizing layer-wise computation of $\Psi$. \\ 
The computations performed in a layer $\Psi^\ell$ of $\Psi$ are described in \eqref{eq:ANNcomp}. Hence, for $1\leq \ell < L$ the computation can be expressed as
\begin{equation*}
    \mathcal{R}_{\Psi^\ell}(y^{l-1})=  \sigma((A^l)^T y^{l-1} + B^l) = \begin{pmatrix}
                                    \sigma(\sum_{i=1}^d (A_{1,i}^l)^T y_i^{l-1} + B_1^l) \\
                                    \vdots \\
                                    \sigma(\sum_{i=1}^d (A_{d,i}^l)^T y_i^{l-1} + B_d^l)
                                \end{pmatrix}    
                                =: \begin{pmatrix}
                                    \sigma(f_1(y^{l-1})) \\
                                    \vdots \\
                                    \sigma(f_d(y^{l-1}))
                                \end{pmatrix}, 
\end{equation*}
where $f^\ell_1,\dots, f^\ell_d$ are affine linear functions with one-dimensional range on the same input domain $[a^{\ell-1}, b^{\ell-1}]^d \subset \R^d$, where $[a^0,b^0] = [a,b]$ and $[a^{\ell}, b^{\ell}]$ is the range of 
\begin{equation*}
    (\sigma \circ f^{\ell-1}_1,\dots, \sigma\circ f^{\ell-1}_d)([a^{\ell-1}, b^{\ell-1}]^d) .
\end{equation*}
Thus, via Step 3, we construct LSNNs $\Phi^\ell_1,\dots, \Phi^\ell_d$ that realize $\sigma \circ f^{\ell}_1,\dots, \sigma\circ f^{\ell}_d$ on $[a^{\ell-1}, b^{\ell-1}]$. Note that by choosing appropriate parameters in the construction performed in Step 2 (as described below \eqref{eq:App_Explain}), e.g., $\norm{A^l}_\infty$ and $\norm{B^l}_\infty$, we can employ the same input and output reference time for each $\Phi^\ell_1,\dots, \Phi^\ell_d$. Consequently, we can parallelize $\Phi^\ell_1,\dots, \Phi^\ell_d$ (see Lemma \ref{lemma:parallelization}) and obtain networks $\Phi^\ell = P(\Phi^\ell_1,\dots, \Phi^\ell_d)$ realizing $\mathcal{R}_{\Psi^\ell}$ on $[a^{\ell-1}, b^{\ell-1}]^d$. Finally, $\Psi^L$ can be directly realized via Step 2 by an LSNN $\Phi^L$ (as in the last layer no activation function is applied and the output is d-dimensional).   
Although $\Phi^\ell$ already performs the desired task of realizing $\mathcal{R}_{\Psi^\ell}$ we can slightly simplify the network. By construction in Step 3, each $\Phi^\ell_i$ contains two auxiliary neurons in the hidden layers. Since the input and output reference time is chosen consistently for $\Phi^\ell_1, \dots, \Phi^\ell_d$, we observe that the auxiliary neurons in each $\Phi^\ell_i$ perform the same operations and have the same firing times. Therefore, without changing the realization of $\Phi^\ell$ we can remove the auxiliary neurons in $\Phi^\ell_2, \dots, \Phi^\ell_d$ and introduce synapses from the auxiliary neurons in $\Phi^\ell_1$ accordingly. This is exemplarily demonstrated in Figure \ref{subfig:parallel_aux} for the case $d=2$. After this modification, we observe that $L(\Phi^\ell) = L(\Phi^\ell_i) = 3$ and 
\begin{align*}
    N(\Phi^\ell) &= N(\Phi^\ell_1) + \sum_{i=2}^d \big( N(\Phi^\ell_i) -2 - N_0(\Phi^\ell_i) \big) =  d  N(\Phi^\ell_1) - (d-1)(2 + N_0(\Phi^\ell_1))\\
    &= d(d+6) - 2(d-1) - (d-1)(d+1) = 4d + 3 \quad \text{ for } 1\leq \ell <L,
\end{align*}
whereas $L(\Phi^L) = 1$ and $N(\Phi^L) = 2d+1$.  

\textbf{Step 5:} Realizing compositions of layer-wise computations of $\Psi$.\\  
The last step is to compose the realizations $\mathcal{R}_{\Phi^1},\dots, \mathcal{R}_{\Phi^L}$ to obtain the realization 
\begin{equation*}
    \mathcal{R}_{\Phi^L} \circ \cdots \circ \mathcal{R}_{\Phi^1} = \mathcal{R}_{\Psi^L} \circ \cdots \circ \mathcal{R}_{\Psi^1} = \mathcal{R}_{\Psi}.
\end{equation*}
As in Step 3, it suffices again to verify that the concatenation of the networks $\mathcal{R}_{\Phi^1},\dots, \mathcal{R}_{\Phi^L}$ is feasible. First, note that for $\ell=1,\dots, L$ the input domain of $\mathcal{R}_{\Phi^\ell}$ is given by $[a^{\ell-1}, b^{\ell-1}]^d$ so that, we can fix the suitable output reference time $T^{\Phi^\ell}_{\text{out}}=t^{\Phi^\ell}_{\text{out}}\, (1,\dots,1)^T \in \R^d$ based on the parameters of the network, the domain $[a^{\ell-1}, b^{\ell-1}]$, and some input reference time $T^{\Phi^\ell}_{\text{in}} =t^{\Phi^\ell}_{\text{in}}\, (1,\dots,1)^T \in \R^d$. By construction in Steps 2 - 4 $T^{\Phi^\ell}_{\text{in}}$ can be chosen freely. Hence setting $T^{\Phi^{\ell+1}}_{\text{in}} = T^{\Phi^\ell}_{\text{out}}$ ensures that the reference times of the corresponding networks agree. It is left to align the input dimension of $\Phi^{\ell+1}$ and the output dimension of $\Phi^{\ell}$ for $\ell=1,\dots, L-1$. Due to the auxiliary neuron in the input layer of $\Phi^{\ell+1}$, we also need to introduce an auxiliary neuron in the output layer of $\Phi^{\ell}$ (see Figure \ref{subfig:parallel}) with the required firing time $t^{\Phi^{\ell+1}}_{\text{in}} = t^{\Phi^\ell}_{\text{out}}$. Similarly, as in Step 3, it suffices to add a single synapse from the auxiliary neuron in the previous layer to obtain the desired firing time.   

Thus, we conclude that $\Phi = \Phi^L \bullet \cdots \bullet \Phi^1$ realizes $\mathcal{R}_\Psi$ on $[a,b]$, as desired. The complexity of $\Phi$ in the number of layers and neurons is given by
\begin{equation*}
    L(\Phi) = \sum_{\ell=1}^L L(\Phi^\ell) = 3 L -2 = 3 L (\Psi) - 2
\end{equation*}
and
\begin{align*}
    N(\Phi) &= N(\Phi^1) + \sum_{\ell=2}^L \big(N(\Phi^\ell) - N_0(\Phi^\ell)\big) + (L-1) \\
    &= 4d + 3 + (L-2)(4d + 3 - (d+1)) + (2d+1 - (d+1)) + (L-1) \\
    &= 3L(d+1) -(2d+1) \\
    &= N(\Psi) + L(2d +3) - (2d+2)
\end{align*}
\end{proof}

\begin{remark}
Note that the delays play no significant role in the proof of the above theorem. Nevertheless, they can be employed to alter the timing of spikes, consequently impacting the firing time and the resulting output. However, the exact function of delays requires further investigation.
The primary objective is to present a construction that proves the existence of an LSNN capable of accurately reproducing the output of any ReLU network. 
\end{remark}

\end{document}